\def\eqref#1{equation~\ref{#1}}
\def\1{\bm{1}}
\DeclareMathAlphabet{\mathsfit}{\encodingdefault}{\sfdefault}{m}{sl}
\SetMathAlphabet{\mathsfit}{bold}{\encodingdefault}{\sfdefault}{bx}{n}
\newcommand{\R}{\mathbb{R}}
\DeclareMathOperator*{\argmin}{arg\,min}
\definecolor{mydeepblue}{RGB}{0,70,140} 
\definecolor{azure}{rgb}{0.0, 0.5, 1.0}
\title{On the identifiability of causal graphs with the invariance principle}
\author{Francesco Montagna \\
Institute of Science and Technology Austria, Chan Zuckerberg Initiative\\
\texttt{francesco.montagna@ist.ac.at}
}
\begin{document}

\maketitle

\begin{abstract}
\looseness-1Causal discovery from i.i.d. observational data is known to be generally ill-posed. We demonstrate that if we have access to the distribution {induced} by a structural causal model, and additional data from \textit{only two} environments with invariant causal mechanisms and sufficiently different noise statistics,
the unique causal graph is identifiable. Notably, this is the first result in the literature that guarantees the entire causal graph recovery with a constant number of environments and arbitrary nonlinear mechanisms. Our only constraint is the Gaussianity of the noise terms; however, we propose potential ways to relax this requirement. Of interest on its own, we expand on the well-known duality between independent component analysis (ICA) and causal discovery; recent advancements have shown that nonlinear ICA can be solved from multiple environments, at least as many as the number of sources: we show that the same can be achieved for causal discovery while having access to much less auxiliary information. 
\end{abstract}

\section{Introduction}
Causal discovery seeks to recover cause--effect structure from data, which allows counterfactual reasoning and prediction under interventions \citep{pearl2009causality,peters2017elements,spirtes2010introduction,Spirtes2000}. However, learning causal structure from \emph{purely observational} i.i.d.\ data is, in general, ill-posed: multiple directed acyclic graphs (DAGs) are distributionally equivalent, i.e., indistinguishable from the data distribution. 


Recent work has explored the problem of causal graph identifiability from multiple environments and soft interventions (i.e., in the setting where non i.i.d. data might naturally occur and does not stem from changes in the causal structure) \citep{perry2022causal,huang2020heterogeneous,heinze2018invariant,peters2015invariant,ghassami2017learning,ghassami2018multidomain,jaber2020causal,jalaldoust2025multidomain,brouillard2020differentiable,heurtebise2025identifiablemultiviewcausaldiscovery}: however, from an identifiability perspective, these results do not provide guarantees of recovery of the unique causal graph with a limited number of environments under generic assumptions. 

Our research overcomes this limitation. We prove that, for structural causal models (SCMs) with arbitrary nonlinear mechanisms, auxiliary information from \textit{only two} sufficiently distinct environments is enough to identify the unique causal graph. Our only constraint is the Gaussianity of the noise terms; however, we outline potential ways to relax this requirement. To our knowledge, this is the first proof of identifiability for full graphs of arbitrary size and generic functional mechanisms from a constant number of environments. Strengthening our findings is the contrast with hard-intervention regimes, where state-of-the-art theory requires the number of experiments to scale with the number of nodes \citep{eberhardt2005on}).

Our work is also of independent methodological interest. In particular, our contributions are built on the duality between causal discovery and independent component analysis (ICA). First \citet{monti2019nonsens}, and later \citet{reizinger2023jacobianbased} recently formalized that nonlinear ICA identifiability results naturally extend to structure learning (well known in the linear case since \citet{shimizu2006alinear}). This is of great relevance in light of the late advancements in multi-environment ICA identifiability pioneered by  \citet{hyvarinen2016time}; however, directly bootstrapping these findings to causal discovery doesn't carry great promise, being ICA the harder problem of the two: we show that where ICA identifiability requires a number of environments that scales linearly with the number of variables, causal graph identifiability can be achieved with data from just two extra domains. This calls for causality-only identifiability results in the multi-environment setting, as developed in our work. 
Inspired by the recent success of ICA with multiple environments, we are hopeful that our approach paves the way to novel causality theory that weakens the requirements in terms of heterogeneity of the data and parametric assumptions. 


Our main contribution is as follows:
\begin{callout}
    We show that the acyclic causal graph underlying an \textit{arbitrary invertible structural causal model} with Gaussian noise terms is identifiable from \textit{only two} sufficiently different auxiliary environments (\cref{fig:overview}).
\end{callout}

Moreover:
\begin{itemize}
    \item We introduce proof techniques that are novel for causal discovery and leverage the (well-known) duality between structural causal models and independent component analysis; to the best of our knowledge, these are the first causality-only identifiability results for nonlinear SCMs that stem from this connection.
    \item We empirically validate our theory through synthetic experiments \footnote{\href{https://github.com/francescomontagna/gaussian-multienv-cd.git}{\texttt{https://github.com/francescomontagna/gaussian-multienv-cd.git}}}.
\end{itemize}

\paragraph{Main related works.} Causal discovery with multiple environments and linear SCMs was popularized by \citet{peters2015invariant}; \citet{heinze2018invariant} extended their results to nonlinear additive noise models. \citet{rothenhausler2015backshift} is the closest to our paper, but their results are limited to linear models. The most relevant reference from the ICA literature is \citet{hyvarinen2016time}, which are the first to illustrate how multiple environments with invariant mechanisms unlock identifiability for ICA models with arbitrary invertible nonlinear mixing functions. A thorough treatment of the literature relevant to our paper is found in \cref{app:related_works}.

\begin{figure}
    \centering
    \includegraphics[width=\linewidth]{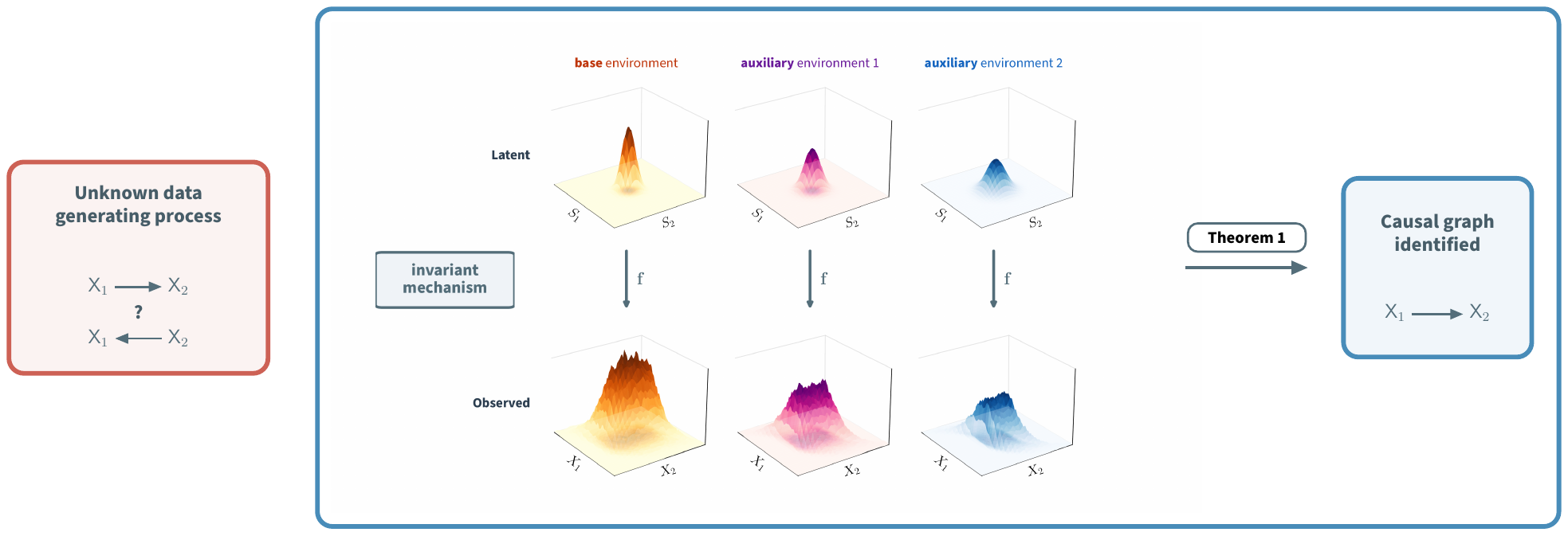}
    \caption{\looseness-1 Given $X_1 := S_1, \: X_2 := f(X_1, S_2)$, the causal direction cannot be inferred from i.i.d.\ data alone (red box, left). Multi-environment data with invariant mechanisms $f$ but different latent noise distributions adds the missing constraints: \cref{thm:identifiability} shows that additional data from just $2$ auxiliary environments suffice to uniquely identify the causal graph for arbitrary size and nonlinear mechanisms. Multi-environment data with invariant mechanisms has already proved useful for recovering causal structure in gene regulatory networks \citep{meinshausen2016methods}, providing a concrete motivating example for our theory.}
    \label{fig:overview}
\end{figure}

\section{Preliminaries}
First, we define structural causal models, independent component analysis, and how they relate. Then, we describe the problem of causal discovery from multiple environments and define identifiability of causal graphs in this context.
\subsection{Structural causal models and ICA}
Let us consider a set of causal variables $\mathbf X$, with components generated according to a structural causal model 
\begin{equation}\label{eq:scm}
X_i := F_i(\mathbf X_{\Parents_i}, S_i), \hspace{.5em}\forall i = 1,...,d,
\end{equation}
\looseness-1 where $\mathbf X_{\Parents_i}$ are the causes of $X_i$, specified by a directed acyclic graph (DAG) $\mathcal G$ with nodes $\bfX$. $\Parents_i \subset \set{1,...,d}$ denotes the indices of the parents of $X_i$ in the graph. The functions $F_i$ are the \textit{causal mechanisms} that map causes to effects. We assume mutually independent noise terms $\mathbf S = (S_1, ..., S_d)$ with density $p_{\theta}$, where $\theta$ is a set of parameters defining the density function. Further, we restrict to structural causal models where there are no latent common causes.

It is well known that the SCM of \cref{eq:scm} can be expressed in the form of an ICA model $(\mathbf f, p_{\theta})$:
\begin{equation}\label{eq:ica}
        \mathbf X = \f(\mathbf S), \qquad
        p_{\theta}(\s) = \prod_{i=1}^d p_{i,\theta}(s_i),
\end{equation}
where $\f$ is the ICA \textit{mixing function}, uniquely specified by the SCM (see \cref{app:scm_to_ica}). 

\paragraph{Notational remarks.} Uppercase letters (e.g., $\bfS$) denote random variables, lower case letters (e.g., $\s$) their realizations. Bold letters are reserved for vectors and vector-valued functions. For an integer $k$, $[k] := \{1,...,k\}$. Further, we define the \textit{support} to keep track of the nonzero entries in matrices: for a matrix $M$, $\supp(M) := \set{(i,j)|i \in [m], j \in [n] \textnormal{ and } M_{ij} \neq 0}$; for a matrix valued function $M$ the support is defined as $\supp(M) = \set{(i,j)|i \in [m], j \in [n] \textnormal{ and there is } \x \textrm{ s.t. } M_{ij}(\x) \neq 0}$.

It is known \citep{reizinger2023jacobianbased} that, under some \textit{faithfulness} assumption, the support of the Jacobian of the mixing function completely identifies the causal structure. 
\begin{definition}[Faithfulness]\label{def:faithfulness} Consider $\mathbf x = \f(\mathbf s)$. We say that $J_{\f^{-1}}(\mathbf x)$ is \emph{faithful} if for each $i,j \in [d]$ $J_{\f^{-1}}(\mathbf x)_{ij} = 0 \iff$ $S_i$ is constant in $X_j$ on the entire domain. In other words:
\begin{equation}\label{eq:faithfulness}
    \supp(\Jfx) = \supp(J_{\f^{-1}}).
\end{equation}
\end{definition}
\begin{proposition}[Proposition 1 in \citet{reizinger2023jacobianbased}]\label{prop:patrik}
Let $J_{\f^{-1}}(\mathbf x)$ faithful. Then, for each $i \neq j$: $$J_{\f^{-1}}(\mathbf x)_{ij} = 0 \iff j \not \in \Parents_i.$$
\end{proposition}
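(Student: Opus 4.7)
My plan is to exploit the invertibility of each causal mechanism implied by the fact that $(\mathbf f, p_\theta)$ is an ICA model, i.e., that $\mathbf f$ is a diffeomorphism on $\R^d$. The key observation is that this invertibility lets one solve each structural equation for its exogenous noise, turning $X_i = F_i(\mathbf X_{\Parents_i}, S_i)$ into an explicit formula $S_i = G_i(X_i, \mathbf X_{\Parents_i})$. The proposition then reduces to reading off which coordinates $G_i$ can functionally depend on, with \cref{def:faithfulness} providing the bridge between functional and pointwise dependence.

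For the "$\Leftarrow$" direction I would first use that $\mathbf f$ is a diffeomorphism to obtain, for each $i$, a well-defined $G_i$ with $[\mathbf f^{-1}]_i(\mathbf x) = G_i(x_i, \mathbf x_{\Parents_i})$, so that $[\mathbf f^{-1}]_i$ does not depend on any coordinate $x_j$ with $j \neq i$ and $j \notin \Parents_i$. Hence $S_i$ is constant in $X_j$ on the entire domain, and \cref{def:faithfulness} gives $J_{\mathbf f^{-1}}(\mathbf x)_{ij} = 0$.

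For the "$\Rightarrow$" direction, suppose $j \in \Parents_i$ with $i \neq j$. The semantics of the DAG say that $F_i$ is not a constant function of $X_j$, so $\partial F_i/\partial X_j$ is not identically zero on the domain. Implicitly differentiating the identity $X_i \equiv F_i(\mathbf X_{\Parents_i},\, G_i(X_i, \mathbf X_{\Parents_i}))$ with respect to $X_j$ yields
\[
0 \;=\; \frac{\partial F_i}{\partial X_j} \;+\; \frac{\partial F_i}{\partial S_i}\,\frac{\partial G_i}{\partial X_j},
\]
and since invertibility of $F_i$ in its noise argument forces $\partial F_i/\partial S_i \neq 0$ everywhere, $\partial G_i/\partial X_j$ vanishes exactly where $\partial F_i/\partial X_j$ does. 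Thus $G_i$ is non-constant in $X_j$ on the full domain, i.e., $(i,j) \in \supp(J_{\mathbf f^{-1}})$, and faithfulness again yields $J_{\mathbf f^{-1}}(\mathbf x)_{ij} \neq 0$.

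The main obstacle is the "$\Rightarrow$" direction: "$j \in \Parents_i$" is a global, functional assertion about $F_i$, whereas the conclusion is pointwise at a specific $\mathbf x$. A priori $\partial F_i/\partial X_j$ (hence $\partial G_i/\partial X_j$) could vanish at this particular $\mathbf x$ even when $F_i$ genuinely uses $X_j$; \cref{def:faithfulness} is exactly what rules this out by equating $\supp(J_{\mathbf f^{-1}}(\mathbf x))$ with the global support $\supp(J_{\mathbf f^{-1}})$. I would therefore be careful to formalize "genuine dependence on $X_j$" through the graph semantics of the SCM, and to isolate faithfulness as the single step that promotes a global non-vanishing statement to a pointwise one at $\mathbf x$.
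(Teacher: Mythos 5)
Your proof is correct and takes essentially the route behind the cited result: the paper does not reprove \cref{prop:patrik} (it imports Proposition 1 of \citet{reizinger2023jacobianbased}), and your argument---recursively inverting the triangular system to write $[\f^{-1}]_i(\x)=G_i(x_i,\x_{\Parents_i})$, then using \cref{def:faithfulness} only in the forward direction to promote ``$\partial G_i/\partial x_j$ not identically zero'' to ``nonzero at the given $\x$''---is the standard one. Two minor points of justification: $\partial F_i/\partial S_i\neq 0$ everywhere does not follow from injectivity of $F_i$ in its noise argument alone (consider $s\mapsto s^3$), but it does follow from $\f$ being a diffeomorphism (\cref{def:ica}), whose Jacobian is triangular in a causal order with diagonal entries $\partial F_i/\partial S_i$ and nonvanishing determinant; and, as you yourself flag, the forward direction requires the structural-minimality reading of the SCM under which $j\in\Parents_i$ entails genuine functional dependence of $F_i$ on $x_j$, which is how the cited statement is intended.
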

This formulation of faithfulness is well known and at the core of the LiNGAM algorithm for linear SCMs \citep{shimizu2006alinear}, and is satisfied almost everywhere under some regularity conditions on $\f$. When this is the case, the above proposition means that for causal discovery we are interested in the support of the inverse Jacobian, and, by \cref{eq:faithfulness}, this can be recovered by having access to the support at a single point where faithfulness is satisfied.

Next, we introduce the notion of \textit{environment} and define the causal discovery problem when multiple environments are available.

\subsection{The invariance principle and notions of identifiability}\label{sec:environments}
Consider the ICA model of \cref{eq:ica}. We define an \textit{environment} as the pair $(\mathbf f, p_{\theta}^e)$, where:
\begin{equation}\label{eq:environment}
    \mathbf X = \f(\mathbf S),  \qquad p^e_{\theta}(\s) = \prod_{i=1}^d p^e_{i,\theta}(s_i). 
\end{equation}
Superscript indices are reserved to specify the environment. The key feature is that, compared with \cref{eq:ica}, the mixing function $\f$ is invariant, while we allow changes in the density of the sources. 
\begin{callout}
    \textbf{The invariance principle.} Given a base environment $(\f, p_{\theta})$, an auxiliary environment is characterized by invariant causal mechanism $\f$ and shifts in the source density $p_{\theta}^e \neq p_{\theta}$.
\end{callout}
From a notational point of view, the \textit{base} environment is denoted with index $0$, i.e. $p_{\theta}^0 := p_{\theta}$; \textit{auxiliary} environments will appear with index $e \in \mathcal{E}$, where $\mathcal{E} \subset \mathbb{N}_{>0}$ is the set of auxiliary environment indices. Real-world examples in the causality literature where the invariance principle is satisfied can be found in \cref{app:fixed_mech}.

Intuitively, causal discovery is the inference problem of finding the causal graph underlying a structural causal model from the data. We are interested in causal discovery from multiple environments. Identifiability is achieved when the graph underlying the structural causal model is uniquely specified by the causal variables' distribution.
In the definition, we denote the pushforward of a density $p$  by $\f$ with $\f_*p$.
\begin{definition}[Identifiability of the causal graph]\label{def:identifiability}
Consider a structural causal model $(\f, p^0_{\theta})$, and  $(\f, p^e_{\theta})_{e \in \mathcal E}$ \textit{auxiliary} environments. We say that the causal graph underlying the SCM is identifiable if, given an alternative model $(\widehat \f, p^0_{\widehat \theta})$ with auxiliary environments $(\widehat f, p_{\widehat \theta}^e)_{e \in \mathcal E}$, then:
    $$\f_* p_\theta^e = \widehat \f_* p_{\widehat \theta}^e \quad \forall e \in \set{0} \cup \mathcal{E} \implies \supp(J_{\f^{-1}}) = \supp(J_{\widehat \f^{-1}}).$$
\end{definition}
The above definition of identifiability, based on the support of the Jacobian inverse of the mixing function, may be a bit unfamiliar, but it's equivalent to what is commonly meant when asking that a causal DAG is identifiable:
\begin{callout}
    Any alternative causal model that matches the distribution of the data is compatible only with the ground truth causal graph (represented with the inverse Jacobian's support).
\end{callout}

\paragraph{Relation with ICA identifiability.}  Compare \cref{def:identifiability} of identifiability of the causal graph with the notion of identifiability in ICA of \cref{def:ica_identifiability} in the appendix: for causal discovery, all we care about is the support of $J_{\f^{-1}}$, which can be identified from any point where the Jacobian is faithful; for independent component analysis, we need to guarantee that the exact values of the Jacobian can be recovered over each point of the domain, up to trivial indeterminacies. This phrasing clarifies that, in the nonlinear setting (where the Jacobian varies with $\x$), causal discovery is a much simpler problem than ICA: it only requires identifying the support at a single point, rather than the value at any point. This is reflected in our main identifiability result (\cref{thm:identifiability}): we will show that the causal graph of a nonlinear SCM can be identified with the information from only two auxiliary environments; this in stark contrast with ICA identifiability results for general mixing functions, that usually require a number of environments that scales linearly ($\mathcal O(d)$) with the number of sources.

\begin{emptyroundbox}
    \textbf{Problem definition.} We aim to characterize the conditions under which the causal graph $\mathcal G$ is identifiable from the fewest possible environments.
\end{emptyroundbox}

\section{Theory}\label{sec:thory}
To develop our theory, we rely on the following assumptions on the ICA model of \cref{eq:ica}.

\begin{assumption}[Invertibility]\label{ass:bijective_diffble}
    $\f$ is a global diffeomorphism and twice differentiable.
\end{assumption}
\begin{assumption}[Rescaling environments]\label{ass:rescaling_envs}
Each environment is obtained as a rescaling of $\mathbf S$, namely $\mathbf S^e$ {is distributionally equivalent to} $L_e\mathbf S$ for each $e \in \mathcal{E}$, with $L_e=\mathrm{diag}(\lambda_1^e,\dots,\lambda_d^e)$. We ask that for each $j \in [d]$ there is at least one $e \in \mathcal{E}$ such that $\lambda_j^e\neq 0$.
\end{assumption}
\begin{assumption}[Faithfulness]\label{ass:faithful_mean}
    For $\f^{-1}(\mathbf x) = \mathbf s$ where $\mathbf s = \mu_{\mathbf S}$, the mean of the vector of sources, the Jacobian is faithful (\cref{def:faithfulness}).
\end{assumption}
\begin{assumption}[Gaussianity]\label{ass:gaussianity}
    $\mathbf{S}$ has Gaussian density $p_{\theta}$ with $\theta$ mean and covariance matrix parameters.
\end{assumption}

\paragraph{Discussion on the Assumptions \ref{ass:bijective_diffble}-\ref{ass:gaussianity}.}
\cref{ass:bijective_diffble} is standard when proving identifiability: the results in \citet{hoyer2008anm,zhang2009pnl,immer2022on} are based on higher-order derivatives, and have strong requirements that guarantee {diffeomorphic causal mechanisms (Corollary 3.5 in \citep{dominguez2023on})}. Also \cref{ass:rescaling_envs} is mild and somewhat necessary: it simply asks that the interventions are \textit{meaningful}, i.e. that they affect the variance; interventions on the mean, intuitively, are not informative as they shift the density graph by a constant, without affecting its \textit{shape} (the gradient and the Hessian of the density, where information about the causal graph lies). \cref{ass:faithful_mean} requires that the Jacobian of the inverse of the mixing function is informative about the causal structure at the mean of $\mathbf S$ (and it's almost surely verified over $\mathbf X$ samples, under some generic regularity conditions on $\f$). The reason behind it is that we probe the identifiability of the Jacobian's support at the mean. The only real simplifying constraint is \cref{ass:gaussianity} of the Gaussianity of the sources, which is, however, not new in the literature (see, e.g., \citet{rolland2022score}). Later, we discuss why this assumption is needed in the paper and potential ways to relax it {(\cref{par:thm1_beyond_gaussianity})}.

In the remainder of the paper we demonstrate that, under these assumptions,  leveraging the ICA formalism we can prove the identifiability of causal graphs, potentially with as few as two auxiliary environments. Our starting point is the invertibility $\f$, so that we can write the density of $\bfX$ with the change of variable for each value $\x = \f(\s)$ as:
\begin{equation}
    p(\x) = p_{\theta}(\s)|\Jfx|.
\end{equation}
Consider an alternative invertible ICA model (\cref{eq:ica}) $(\widehat \f, p_{\hat \theta})$ such that:
\begin{equation}
    p(\mathbf x) = p_{\hat \theta}(\s)|\widehatJfx|.
\end{equation}
We define the \textit{indeterminacy function}
\begin{equation}
    \h := \widehat \f^{-1} \circ \f,
\end{equation}
which "quantifies" how different the two ICA solutions are. By the multivariate chain rule, the following relation among Jacobian matrices holds:
\begin{equation}\label{eq:jacob_indet}
    J_{\f} = J_{\widehat \f}J_{\h}.
\end{equation}
\looseness-1We show that (under Assumptions \ref{ass:bijective_diffble}-\ref{ass:gaussianity} on $(\f, p_{\theta})$) there is at least one point $\mathbf x = \f(\mathbf s) = \widehat \f(\hat \s)$ such that the Jacobian $J_{\h}(\s)$ is a scaled permutation, meaning that $J_{\f^{-1}}$ support is identifiable up to column permutation. Given that for acyclic causal models permutations are easily removed \citep{shimizu2006alinear}, this is equivalent to identifiability of the causal graph in the sense of \cref{def:identifiability}, as we discuss next.

\subsection{Identifiability from second order derivatives of the log-likelihood}\label{sec:theory_main}
In this section, we present our main theoretical result and the intuitions behind it. Our argument for identifiability relies on the analysis of the Hessian of the log-likelihood of $\bfX^e$ for all environments.  We consider the case where $\f^{-1}(\x) = \s = \mu_{\bfS}$ (by construction, there is a unique corresponding $\hat \s = \hat\f^{-1}(\x)$). We partition the set of auxiliary environments $\mathcal{E}$ into two groups $\mathcal{E}_{1}$ and $\mathcal{E}_{2}$. Then, we define the following quantities:
\begin{equation}\label{eq:omega}
\begin{split}
    &\Omega_1 := \sum_{e \in \mathcal{E}_1} D^2_{\mathbf s} \log p_{\theta}(\s) - D^2_{\mathbf s} \log p^e_{\theta}(\s) \\
    &\Omega_2 := \sum_{e \in \mathcal{E}_2} D^2_{\mathbf s} \log p_{\theta}(\s) - D^2_{\mathbf s} \log p^e_{\theta}(\s),
\end{split}
\end{equation}
where $D^2$ denotes the differential operator that returns the Hessian matrix. Similarly, we define $\widehat \Omega_1, \widehat \Omega_2$ by replacing $\theta$ with $\hat \theta$. The introduction of $\Omega_l, \widehat \Omega_l$, $l=1,2$, is instrumental for the next result. 
\begin{restatable}{lemma}{HessianDiff}\label{lem:hess_diff}
    Let $\x = \f(\s) = \widehat \f(\hat \s)$, where $\s = \mu_{\bfS}$. Let Assumptions \ref{ass:bijective_diffble},\ref{ass:rescaling_envs} and \ref{ass:gaussianity} satisfied. Then:
    \begin{align}
            &\sum_{e \in \mathcal{E}_1}D_\x^2 \log p(\x) - D_\x^2 \log p^e(\x) = \Jfx^T \Omega_1 \Jfx = \widehatJfx^T \widehat\Omega_1 \widehatJfx \label{eq:hess1} \\
            &\sum_{e \in \mathcal{E}_2}D_\x^2 \log p(\x) - D_\x^2 \log p^e(\x) = \Jfx^T \Omega_2 \Jfx = \widehatJfx^T \widehat\Omega_2 \widehatJfx\label{eq:hess2}
    \end{align}
\end{restatable}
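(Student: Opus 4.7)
The plan is to derive both equalities in \eqref{eq:hess1} by applying the chain rule to the Hessian of $\log p(\x)-\log p^i(\x)$ and then using Gaussianity to kill the first-order term that arises. Under \cref{ass:bijective_diffble}, the change-of-variables formula gives $\log p^i(\x) = \log p^i_\theta(\s) + \log|\det \Jfx|$ and $\log p(\x) = \log p_\theta(\s) + \log|\det \Jfx|$ with $\s = \f^{-1}(\x)$; subtracting cancels the Jacobian log-determinant, leaving
\begin{equation*}
\log p(\x) - \log p^i(\x) = \log p_\theta(\s) - \log p^i_\theta(\s) =: h_i(\s).
\end{equation*}
The analogous identity with $\widehat\f$ and $\hat\s = \widehat\f^{-1}(\x)$ is valid through the alternative model $(\widehat\f, p_{\hat\theta})$, since the log-determinant again cancels in the difference.

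Differentiating the composition $h_i\circ\f^{-1}$ twice in $\x$ via the second-order chain rule yields
\begin{equation*}
D_\x^2\bigl[h_i\circ\f^{-1}\bigr](\x) = \Jfx^T\bigl(D_\s^2 h_i(\s)\bigr)\Jfx + \sum_{k=1}^d \partial_{s_k}h_i(\s)\,D_\x^2[\f^{-1}]_k(\x).
\end{equation*}
Summed over $i\in I_1$, the quadratic piece is exactly $\Jfx^T\Omega_1\Jfx$ by the definition of $\Omega_1$, so the first equality in \eqref{eq:hess1} reduces to showing that the first-order contribution vanishes at the evaluation point $\s = \mu_\bfS$.

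The crux of the argument, and the place where \cref{ass:rescaling_envs} and \cref{ass:gaussianity} enter, is precisely this vanishing; this is what I expect to be the main technical step. Gaussianity makes the score $\nabla_\s\log p_\theta(\s)$ affine in $\s-\mu_\bfS$ and therefore zero at $\s=\mu_\bfS$. Under \cref{ass:rescaling_envs}, $\bfS^i = L_i\bfS$ is also Gaussian; with the standard centering convention $\mu_\bfS = 0$ (so $\mu_{\bfS^i} = L_i\mu_\bfS = 0$ as well), the score of $p^i_\theta$ also vanishes at $\s = \mu_\bfS$. Consequently $\nabla_\s h_i(\mu_\bfS) = 0$ for every $i$, the first-order term drops out, and one obtains $\sum_{i\in I_1}\bigl(D_\x^2\log p(\x) - D_\x^2\log p^i(\x)\bigr) = \Jfx^T\Omega_1\Jfx$ at $\x = \f(\mu_\bfS)$.

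The second equality $\widehatJfx^T\widehat\Omega_1\widehatJfx$ is obtained by repeating exactly the same derivation through the alternative model's change of variables: since the admissible source family $\mathcal P$ is Gaussian (with matching centering), the same score-vanishing argument applies at $\hat\s$, yielding the other quadratic form. Because the left-hand side depends only on the observed densities $p, p^i$, both representations describe the same quantity, closing \eqref{eq:hess1}. Equation \eqref{eq:hess2} follows verbatim by replacing $I_1$ with $I_2$ throughout.
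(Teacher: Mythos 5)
Your proof is correct and takes essentially the same route as the paper's: change of variables for the log-density, the second-order chain rule, and Gaussianity forcing the source scores to vanish at $\s=\mu_{\bfS}$ so that only the quadratic term $\Jfx^T\Omega_l\Jfx$ survives (the only cosmetic difference being that you cancel the log-determinant before differentiating, while the paper differentiates each $\log p^i(\x)$ and cancels it in the difference). Your explicit centering convention $\mu_{\bfS}=0$ merely makes visible a point the paper leaves implicit (the rescaled sources $L_i\bfS$ must still have vanishing score at $\mu_{\bfS}$), and your brief treatment of the $\widehatJfx^T\widehat\Omega_l\widehatJfx$ equality matches the paper's equally terse ``the same results hold replacing $\f$ with $\widehat\f$.''
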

The proof is derived by direct computation and can be found in \cref{app:hess_lemma_proof}. {We point to Lemma 7 in \citet{varici2025score} for related results that analyze the difference of first-order derivatives of the log-likelihood, in the context of causal representation learning with soft interventions}. 

We can intuitively illustrate how the identifiability of the Jacobian's support follows {from our \cref{lem:hess_diff}}. A first remark is that the $\Omega_l, \widehat \Omega_l$ matrices are diagonal. That is because, for a vector of mutually independent random variables, the Hessian of the log-density is diagonal (see \cref{app:diagonal_hess} for details about it). Second, by the chain rule, \cref{eq:hess1,eq:hess2} imply $J_{\h}(\s)^T \widehat \Omega_l J_{\h}(\s) = \Omega_l$ for $l=1,2$, from which
\begin{equation}\label{eq:jindet_gram}
    J_{\h}(\s)^{-1} \widehat \Omega_1^{-1} \widehat \Omega_2  J_{\h}(\s) = \Omega_1^{-1}\Omega_2.
\end{equation}
This means that $J_{\h}(\s)$ maps one diagonal matrix to another: if the eigenvalues of $\widehat \Omega_1^{-1}\widehat \Omega_2$ are distinct, that is enough to force $J_{\h}(\s)$ to a scaled permutation, which is exactly our goal. This sketched argument is key to understanding how \cref{eq:hess1,eq:hess2} provide enough constraints to identify the support of $J_{\f^{-1}}$. Clearly, this discussion implicitly requires that $\Omega_l$ and $\widehat \Omega_l$ are full rank. This can be achieved under the following conditions over the rescaling matrices $L_e=\mathrm{diag}(\lambda_1^e,\dots,\lambda_d^e)$ that define the multiple environments.

\begin{assumption}[Sufficient variability]\label{ass:invertible_omega}
For each $j \in [d]$: 
$$
\sum_{e \in \mathcal{E}_1}\frac{1}{(\lambda_j^e)^2}\ \neq\ |\mathcal{E}_1| \textnormal{ and } \sum_{e \in \mathcal{E}_2}\frac{1}{(\lambda_j^e)^2}\ \neq\ |\mathcal{E}_2|.
$$
\end{assumption}
The assumption basically requires that there is sufficient variability between the different environments. Similar requirements of sufficient variability are ubiquitous in the nonlinear ICA literature (e.g. \citet{hyvarinen2016time,khemakhem2020icebeem,lachapelle22disentanglement}). Intuitively speaking, \cref{ass:invertible_omega} is satisfied when, for each of the two groups of environments ($\mathcal{E}_1$ and $\mathcal{E}_2$), each source $S_j$ is subject to rescaling. To see that, consider the LHS of the first equation: $\lambda_j^e = 1$ for each $e \in \mathcal{E}_1$ corresponds to the case when the variable $S_j$ is never subject to rescaling in any of the environments, and indeed yields a violation of the assumption. Note that even if $S_j$ is subject to rescaling for some $e \in \mathcal{E}_1$, the values of $(\lambda_j^e)_{e \in \mathcal{E}_1}$ can always be tuned such that the assumption is violated; however, this corresponds to pathological choices of the rescaling coefficients, which never occur in general (shown in \cref{prop:assumption5_as} in the appendix). 


Next, we are ready to state our main identifiability result.
\begin{restatable}{theorem}{Identifiability}\label{thm:identifiability} 
    Consider the groundtruth ICA model $(\f, p_{\theta})$ of \cref{eq:ica} and the alternative $(\widehat \f, p_{\hat \theta})$. Let  Assumptions \ref{ass:bijective_diffble}-\ref{ass:invertible_omega} be satisfied, and assume that the elements in the set $\set{(\Omega_1^{-1} \Omega_2)_{ii}}_{i=1}^d$ are pairwise distinct. Let $\mathbf x = \f(\s) = \widehat\f(\hat\s)$ and $\s = \mu_\bfS$: then, the indeterminacy function $\h := \widehat \f^{-1}\circ \f$ satisfies $J_{\h}(\s)$ full rank and diagonal, meaning that the causal graph $\mathcal G$ is identifiable.
\end{restatable}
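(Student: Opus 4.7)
The plan is to combine Lemma \ref{lem:hess_diff} with the chain rule $J_\f = J_{\widehat\f}\,J_\h$ to reduce the identifiability claim to a conjugation relation between two diagonal matrices through $J_\h(\s)$. First, applying the inverse function theorem to $\f^{-1}=\h^{-1}\circ\widehat\f^{-1}$ gives $\Jfx = J_\h(\s)^{-1}\widehatJfx$. Substituting this into both equations of Lemma \ref{lem:hess_diff} and cancelling the invertible Jacobian $\widehatJfx$ on each side yields the pair of congruences
\begin{equation*}
    J_\h(\s)^T\,\widehat\Omega_l\,J_\h(\s) \;=\; \Omega_l, \qquad l=1,2.
\end{equation*}

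The next step will be to show that all four matrices $\Omega_1,\Omega_2,\widehat\Omega_1,\widehat\Omega_2$ are invertible and diagonal. Diagonality follows from mutual independence of the sources: Assumption \ref{ass:gaussianity} forces the log-density to be a sum of univariate Gaussian log-densities, so $D_\s^2\log p_\theta(\s)$ is diagonal, and Assumption \ref{ass:rescaling_envs} preserves the factorized structure so that each $D_\s^2\log p^i_\theta(\s)$ is diagonal as well. A direct computation then shows that the $j$-th diagonal entry of $\Omega_1$ equals $\sigma_j^{-2}\bigl[\sum_{i\in I_1}(\lambda_j^i)^{-2}-e_1\bigr]$, which is nonzero precisely under Assumption \ref{ass:invertible_omega}; the analogous statement holds for $\Omega_2$. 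Invertibility of $\widehat\Omega_l$ then follows immediately from the congruence above together with invertibility of $J_\h(\s)$.

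With invertibility in hand, combining the two congruences (inverting the $l=1$ relation and multiplying against the $l=2$ relation) produces the similarity
\begin{equation*}
    J_\h(\s)^{-1}\bigl(\widehat\Omega_1^{-1}\widehat\Omega_2\bigr)J_\h(\s) \;=\; \Omega_1^{-1}\Omega_2.
\end{equation*}
Setting $M:=J_\h(\s)$, $D:=\widehat\Omega_1^{-1}\widehat\Omega_2$, and $D':=\Omega_1^{-1}\Omega_2$, the relation $DM=MD'$ reads entrywise $(D_{ii}-D'_{jj})\,M_{ij}=0$. The theorem's hypothesis that the diagonal entries of $D'$ are pairwise distinct then forces each row of $M$ to have at most one nonzero entry; invertibility of $M$ upgrades this to exactly one nonzero entry per row, making $M$ a scaled permutation matrix. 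By \cref{prop:patrik}, the supports of $J_{\f^{-1}}$ and $J_{\widehat\f^{-1}}$ agree up to a relabelling of the nodes, and acyclicity of the underlying DAG pins down a unique consistent labelling (in the spirit of \citet{shimizu2006alinear}), yielding $J_\h(\s)=D$ diagonal and identifiability of $\mathcal G$ per \cref{def:identifiability}.

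The main obstacle will be the bookkeeping in the first step: one must track carefully whether each derivative is evaluated at $\s$, $\hat\s$, or $\x$, and justify the invertibility of $\widehatJfx$ before cancelling it (this uses Assumption \ref{ass:bijective_diffble} for $\widehat\f$ together with the inverse function theorem). The final step from the similarity equation to a scaled permutation is elementary once the distinct-eigenvalue hypothesis is in place, and removing the residual permutation indeterminacy via acyclicity is routine in causal-discovery.
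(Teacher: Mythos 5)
Your proof is correct and follows essentially the same route as the paper: the congruences $J_\h(\s)^T\widehat\Omega_l J_\h(\s)=\Omega_l$ from \cref{lem:hess_diff}, invertibility of $\Omega_l$ and $\widehat\Omega_l$ via \cref{ass:invertible_omega}, the similarity $J_\h(\s)^{-1}\bigl(\widehat\Omega_1^{-1}\widehat\Omega_2\bigr)J_\h(\s)=\Omega_1^{-1}\Omega_2$, the scaled-permutation conclusion from the distinct diagonal entries, and the final removal of the permutation by acyclicity with the graph read off via faithfulness (\cref{ass:faithful_mean}, which you should cite explicitly at that point). The only, immaterial, difference is that you obtain the scaled-permutation structure from the entrywise relation $(D_{ii}-D'_{jj})M_{ij}=0$ plus invertibility of $M$, whereas the paper argues via the eigenspaces of the two diagonal matrices being spanned by standard basis vectors and $M$ mapping one set of eigenspaces to the other.
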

\cref{thm:identifiability} assumes that the elements in the set $\set{(\Omega_1^{-1} \Omega_2)_{ii}}_{i=1}^d$ are pairwise distinct. This requirement excludes pathological choices of the coefficients of the rescaling matrices $L_e$ that define the multiple environments, and it is generically satisfied (\cref{prop:distinct-omega-ratios} in the appendix).
\begin{proof}[Proof sketch (full proof in \cref{app:thm1_proof})]
By \cref{lem:hess_diff} we have 
\begin{equation}\label{eq:sketch1}
M^T\Omega_l M = \widehat \Omega_l, \quad l=1,2,
\end{equation}
\looseness-1where $M:= J_{\h^{-1}}(\hat \s)$. Define $A := \widehat \Omega_1^{-1} \widehat \Omega_2$ and $B := \Omega_1^{-1} \Omega_2$. From \cref{eq:sketch1} we can show that $A = M^{-1}BM$, i.e. that $A$  and $B$ are similar. Moreover, being $\set{(\Omega_1^{-1} \Omega_2)_{ii}}_{i=1}^d$ elements pairwise distinct, the diagonal elements of $A$ and $B$ are never repeated. Note that the eigenvectors of a diagonal matrix with all distinct eigenvalues are aligned with the standard basis: given that $M$, by definition of similarity, maps the eigenvectors of $A$ to eigenvectors of $B$, we conclude that it is a scaled permutation. The permutation is removed leveraging the acyclicity of the causal model, according to Lemma 1 in \citet{reizinger2023jacobianbased}. \cref{ass:faithful_mean} implies that the causal graph is identified.
\end{proof}
\paragraph{Identifiability from two auxiliary environments.} The theorem tells that, given that we have access to two groups of auxiliary environments, both inducing changes in the variance of all sources, at the mean of the sources the ground truth and the alternative models are equivalent up to rescaling. This constrains the support of $J_{\widehat \f^{-1}}$ of the alternative model to be equal to that of $J_{\f^{-1}}$, which is enough to guarantee identifiability of the causal graph. It is interesting to discuss the theorem when $|\mathcal{E}|=2$, showing that the above result demonstrates identifiability with as few as two additional environments. In this setting, with $\mathcal{E}_1 = \{1\}$, $\mathcal{E}_2 = \{2\}$, if $L_1 = \operatorname{diag}(\lambda_j^1)_{j=1}^d$ and $L_2 = \operatorname{diag}(\lambda_j^2)_{j=1}^d$ with $\lambda_j^1, \lambda_j^2 \neq 1$ for each $j \in [d]$, then we have two extra environments where the variance of \textit{all} the sources is affected by rescaling. This is sufficient to guarantee that the assumptions of \cref{thm:identifiability} are met. An important consequence is that the number of required environments does not scale with the number of nodes in the graph, in contrast with similar findings for nonlinear ICA identifiability. As long as there is sufficient variability in the sources of two environments (relative to the base model), we are always guaranteed that the causal graph can be recovered.

\paragraph{\cref{thm:identifiability} beyond Gaussianity.}\label{par:thm1_beyond_gaussianity} \cref{thm:identifiability} inherits the assumption of Gaussianity from \cref{lem:hess_diff}; here, we briefly discuss potential ways to relax it. At a general point $\x = \f(\s)$ the Hessian of the log-likelihood is equal to
$$
\Jfx^T D^2_{\mathbf \s} \log p^e(\s) \Jfx + D_\x^2 \log |\Jfx| + \sum_{j=1}^d \partial s_j \log p^e(s_j) D^2 \f^{-1}_j(\x).
$$
\looseness-1The log-determinant term cancels by taking the difference between environments. To recover \cref{eq:hess1,eq:hess2} in \cref{lem:hess_diff}, we note that the summation of second-order derivatives vanishes when $\nabla \log p^e(\s) = 0$, namely at the mean of the Gaussian sources. However, this can hold for any source distribution that has at least one point where the gradient is zero, a remark that naturally extends \cref{lem:hess_diff} (and hence, \cref{thm:identifiability}) to a larger class of causal models. Moreover, from a practical perspective, even if the gradient of the log-likelihood of the sources does not vanish, \cref{lem:hess_diff} is \textit{approximately} true when the gradient is sufficiently small. This can occur, e.g., for heavy-tailed distributions. This analysis should convince that Gaussianity is a sufficient but not necessary requirement, and hopefully inspire future research to extend our identifiability results. {Mathematical details on the steps in this paragraph, as well as an expanded discussion on the generalization of our theory for more general classes of distributions, are found in \cref{app:thm1_beyond_gaussianity}.}


Next, we validate the conclusions of \cref{thm:identifiability} with experiments.

\section{Empirical results}\label{sec:experiments}
%
\looseness-1In this section, we report and analyse empirical results that validate our theory. Our experiments on synthetic data show that if the assumptions of \cref{thm:identifiability} hold, the causal direction can be recovered from the data. {In the main paper, } we focus on bivariate graphs, commonly adopted as the easiest yet non-trivial setting for testing identifiability (e.g., \citet{hoyer2008anm,zhang2009pnl,immer2022on}). {Additional experiments on multivariate causal graphs are in \cref{app:exp_multivar}.}

\subsection{Synthetic data generation}\label{sec:data}
\looseness-1We generate synthetic data from bivariate causal models with independent noise terms, sampled from a normal distribution with unit mean and covariance entries uniformly drawn between $[1, 1.5]$. Given the variables $x_1, x_2$ and the graph $x_1 \to x_2$ we consider the following causal mechanisms that comply with the assumptions of \cref{thm:identifiability}: \textit{(i)} $x_2 := s_1^2 \operatorname{arctan}(s_2) + s_2^3$ \textit{(ii)}  $x_2 := s_1^2 s_2 + \operatorname{arctan}(s_2)$ \textit{(iii)}  $x_2 := s_1^2 + \operatorname{arctan}(s_1)s_2 + s_1s_2^3$. Note that any of these models can not be reparametrized to a post nonlinear or location scale noise model, which are the most general  SCMs identifiable from pure observations \citep{zhang2009pnl,immer2022on}. Additionally, we consider data from a linear Gaussian model, notably non-identifiable. We run experiments on datasets with $\set{3, 6, 9}$ environments. For each environment, we generate $2000$ observations. In \cref{app:beyond_gaussian}, we discuss experiments with non-Gaussian independent sources. Interestingly, these additional results seem to support our hypothesis that \cref{thm:identifiability} could be extended to other source distributions.

\subsection{Analysis of the experimental results}
In this section, we analyse the empirical results. First, we introduce an algorithm for inferring the Jacobian support that leverages our theory.

\begin{figure}[t]
  \centering
  \includegraphics[width=0.7\linewidth]{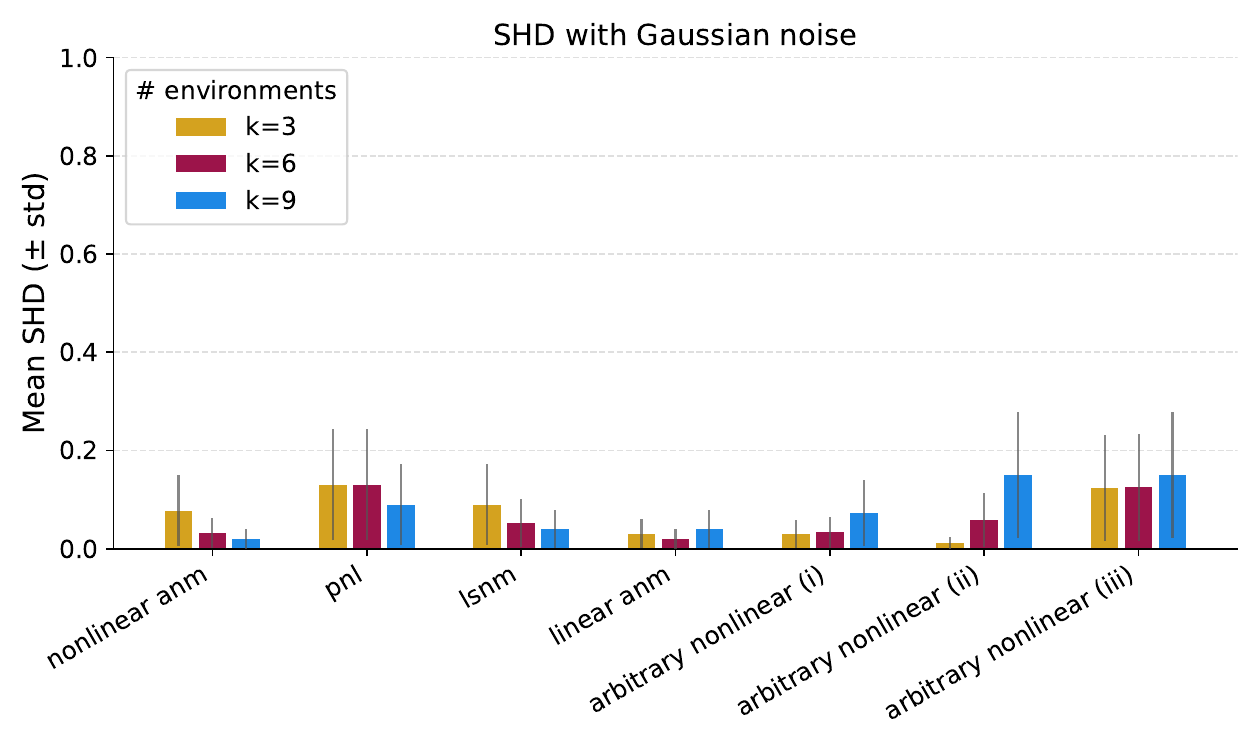} 
  \caption{\looseness-1Average SHD ($0$ is best, $1$ is worst) achieved by \cref{alg:jacobian_support_sketch} over $50$ seeds  on binary graphs. When the assumptions of \cref{thm:identifiability} are satisfied, the method can appropriately infer the causal direction, both in the observationally identifiable setting (nonlinear ANM, PNL, LSNM) and the observationally non-identifiable one (linear Gaussian model and the three SCMs with arbitrary nonlinearity). The number of environments does not have a notable effect on the accuracy.} 
  \label{fig:gaussian_experiments}
\end{figure}

\begin{algorithm}[t]
\caption{Estimating $\supp J_{\mathbf f^{-1}}$ from the data (algorithm sketch)}
\label{alg:jacobian_support_sketch}
\DontPrintSemicolon      
\begin{spacing}{1.15}
\KwData{$\widehat X \in \R^{k \times n \times d}$ \tcp*[r]{$\forall$ env: $n$ d-dimensional observations.} 
\hspace{2.7em}$\mathcal{E}_1, \mathcal{E}_2 \subset [k]$ \tcp*[r]{Set of indices splitting the environments in two groups}} 
\KwResult{Estimate of $\supp J_{\f^{-1}}$}
 $\widehat S \leftarrow \operatorname{score\_estimate}(\widehat X) \in \R^{k \times n \times d}$\\
$\widehat H \leftarrow \operatorname{hess\_estimate}(\widehat X) \in \R^{k \times n \times d \times d}$\\
\vspace{1em}
\tcp*[h]{For each environment $e$, find the sample corresponding to the mean of the source}\\
\For{$e=1,...,k$}{
    $m_e \leftarrow i$ s.t. $\f^{-1}(\widehat X[e,i]) \approx \mu_{\bfS}$
}
\vspace{1em}
\tcp*[h]{Difference of Hessians at the mean (i.e. \cref{eq:hess1,eq:hess2})}\\
$\widehat H_{\textnormal{diffs}} \leftarrow 0 \in \R^{2 \times d \times d}$\\
\For{$\ell=1,2$}{
    \For{$e \in \mathcal{E}_\ell$}{
        $\Delta_H = \widehat H[0,m_1] - \widehat H[e,m_e]$ \tcp*[r]{$m_1$ is the index for the base environment}
        $\widehat H_{\textnormal{diffs}}[\ell] \leftarrow \widehat H_{\textnormal{diffs}}[\ell] + \Delta_H$.
    }
}
$M \leftarrow \widehat H_{\textnormal{diffs}}^{-1}[1]\widehat H_{\textnormal{diffs}}[2] \approx J_{\f} \Omega_1^{-1}\Omega_2 J_{\f^{-1}}$ \tcp*[r]{$H_{\textnormal{diffs}}[\ell]  \approx J_{\f^{-1}}^T\Omega_\ell J_{\f^{-1}}$, by \cref{eq:hess1,eq:hess2}}
$\widehat J_{\f^{-1}} \leftarrow \textnormal{diagonalize}(M) \approx J_{\f^{-1}}DP$ \\
\Return{$\supp \left(\widehat J_{\f^{-1}} P^{-1}\right)$} \tcp*[r]{$P$ can be found using the acyclicity of the causal graph.}
\end{spacing}
\end{algorithm}

\paragraph{Algorithm.} The simplified pseudocode is found in \cref{alg:jacobian_support_sketch} (a detailed version is presented in \cref{app:algorithm}). The steps in our procedure closely follow the proof of 
\cref{thm:identifiability}: this approach to algorithmic design is not necessarily the best, which is why we highlight that our method is not within our main contributions. For a single inference, the input is the data tensor $\widehat X\in \R^{k \times n \times d}$: for each environment from $1$ to $k$ it consists of a dataset with $n$ observations of $d$ causal variables. Additionally, we are given the sets $\mathcal{E}_1, \mathcal{E}_2 \subset [k]$ of indices that split the auxiliary environments into two groups, as required by our theory. The first environment is taken as the base one. We have two steps where statistical estimation is involved: \textit{(i)} For each environment, the gradient and the Hessian of the log-likelihood are approximated via the Stein gradient estimator, introduced in \cite{li2018gradient} and popularized in causal discovery by \citet{rolland2022score,montagna23nogam}; \textit{(ii)} For each environment $i\in [k]$, we need to find the observation $j \in [n]$ such that $\f^{-1}(\widehat X[i,j]) \approx \mu_\bfS$, that is,  the data point generated mixing the source vector at the mean. Fortunately, this can be consistently estimated from the score $\nabla \log p_{\x}$, as we demonstrate in \cref{prop:mean_identifiability} in the appendix. These two steps are achieved by \cref{alg:jacobian_support_sketch} at the end of the first for loop. At this stage, all statistical quantities have been estimated: we note that, being the Stein estimator consistent, the algorithm is correct in the infinite sample limit. In the second for loop, we take the points at the estimated mean that we previously found, and compute the difference of the Hessians between the base and auxiliary environments: this exactly mirrors the first equality in \cref{eq:hess1,eq:hess2} of \cref{lem:hess_diff}. Next, in the algorithm's notation, we compute
\begin{equation}\label{eq:alg_linear_system}
    M := \widehat H^{-1}_\textnormal{diffs}[1] \widehat H_\textnormal{diffs}[2] \approx  J_{\f} \Omega_1^{-1}\Omega_2 J_{\f^{-1}}.
\end{equation}
Then, we solve the linear system $\widehat H_\textnormal{diffs}[1] M = \widehat H_\textnormal{diffs}[2]$ to find $M$. 
In the infinite samples limit \cref{eq:alg_linear_system} is a precise equality, such that $M$ and $\Omega_1^{-1}\Omega_2$ are similar: diagonalizing  $M$ we find $J_{\f^{-1}}$ up to a scaled permutation. 
The permutation indeterminacy is removed leveraging the assumption that the causal graph is acyclic via standard arguments (see \citet{shimizu2006alinear} and \citet{reizinger2023jacobianbased}). Finally, the algorithm returns the estimated support of the inverse Jacobian.

\paragraph{Analysis of the experiments.} 
\looseness-1In \cref{fig:gaussian_experiments} we illustrate the empirical performance of our method on several synthetic datasets generated from a bivariate causal model. We consider SCMs with the arbitrary nonlinear mechanisms \textit{(i), (ii), (iii)} described in \cref{sec:data}, and linear Gaussian models; as a sanity check, we also experiment on nonlinear additive noise models (ANM), post-nonlinear models (PNL), and location scale noise models (LSNM), which are all the nonlinear SCMs where identifiability can be achieved from observational data (see \cref{sec:observational_identifiability} for details). All datasets are generated under the assumption that a causal effect exists (i.e., the ground truth graphs always have one arrow). We measure the errors through the structural hamming distance (SHD). This is equivalent to the number of edge additions, removals, or direction flips that are required to recover the ground truth graph from the estimated one: SHD=0 corresponds to correct inference, SHD=1 to an error. For each experimental configuration, consisting of function type and number of environments, we consider $50$ seeds over which we compute the empirical mean and deviation of the SHD. The results are in line with our theory: we see that for the three models with \textit{arbitrary} mechanisms, and the linear Gaussian SCM (all non-identifiable from pure observations), the average SHD is close to 0, which is especially evident when we do inference with only $3$ environments. Interestingly, we see that adding environments doesn't always have a beneficial effect. This is not surprising, as we showed that two auxiliary environments are sufficient for inference. The method can also infer the causal direction for the ANM, PNL, and LSNM. We conclude that the empirical outcomes support our theory.

\begin{callout}
    Our experiments validate the theoretical results: in the finite samples regime, when noise is Gaussian, only $2$ sufficiently different auxiliary environments ere enough to identify causal directions.
\end{callout}

\par{\textit{Remark on multivariate graphs.}} \looseness-1 {Multivariate experiments are delayed to the \cref{app:exp_multivar}. On linear Gaussian SCMs, we find that our method can infer the causal order with only $3$ environments for graphs up to $50$ nodes, which is strong evidence in support of our theory. In the nonlinear setting, our method struggles to scale to high dimensions, and we limit our experiments to $5$ nodes. A detailed discussion on the scalability of our approach is provided in} the \textit{Limitations} section \ref{app:exp_limitations}: in practice, scaling causal discovery with multiple environments beyond the bivariate setting is a well-known, unaddressed challenge, already found in \citet{reizinger2023jacobianbased} and \citet{monti2019nonsens}. Given that algorithmic contributions fall beyond the scope of our paper, we leave this open problem for the future.

\section{Conclusion}
\looseness-1We demonstrated that the causal graph of a structural causal model with arbitrary nonlinear mechanisms is identifiable; surprisingly, this can be achieved given the auxiliary information of \textit{only two} (sufficiently different) environments. Our main assumption is the Gaussianity of the noise terms, for which, however, we discuss potential relaxations. Our findings extend on the well-known duality between ICA and causal discovery: the first problem concerns the identifiability of the independent sources at each point, whereas causality only needs to access the support of the Jacobian mixing function at \textit{a single point}, when faithfulness is satisfied. The exciting consequence of this asymmetry is that while ICA identifiability requires a number of environments that grows linearly with the number of sources, for causal discovery, a constant number is sufficient: this makes our theoretical results appealing even in high dimensions. We hope that our work inspires novel identifiability theory beyond the Gaussianity constraint. Moreover, in light of our results, finding an efficient and effective algorithm for causal discovery with multiple environments and in high dimensions is a promising research direction. 

\paragraph{Reproducibility statement.} \cref{sec:data} describes the specifics for generating the synthetic data of our experiments. \cref{app:computational_resources} discusses the computational resources that were required for their execution. As supplementary material, we provide a zip folder that allows reproducing our empirical analysis. Particularly, it contains the Python code for: \cref{alg:jacobian_support}, the synthetic data generation, the experiments execution, and the visualizations of the figures of this paper. For the theoretical results, we explicitly state and discuss in detail all the assumptions (Assumptions \ref{ass:bijective_diffble}-\ref{ass:invertible_omega}) required in \cref{thm:identifiability} (our main contribution). A proof sketch and a detailed demonstration are included in the main text and the appendix, respectively (\cref{app:thm1_proof}).

\subsubsection*{Acknowledgments}
FM is funded by the Chan Zuckerberg Initiative. 


\newpage
\bibliography{biblio}
\bibliographystyle{iclr2026_conference}

\appendix
\tableofcontents

\section{LLM usage statement}
In this work, LLMs were occasionally used for polishing and improving the writing. All research contributions in terms of theory and experiments' analysis were carried by the authors.

\section{Limitations}
In this section, we discuss the limitations of our work and the open problems it leaves.

\subsection{Theory}
The main constraint in our theory is the requirement of Gaussian noise terms. In the main text (cf. \cref{sec:theory_main}, the paragraph \textit{\cref{thm:identifiability} beyond Gaussianity}), we discuss how this assumption is sufficient but might not be necessary. In fact, our theory can be extended to a structural causal model where the distribution of the sources has a vanishing gradient at some point. Our work does not address how to extend these result to arbitrary continuous distributions, which remains an open problem.

\subsection{Experiments}\label{app:exp_limitations}
\subsubsection{Synthetic data} One limitation in our work is that experiments are run on synthetic data. This is common in the causal discovery literature due to the challenge of accessing data with a reliable ground truth causal graph. Moreover, data collection often happens under the i.i.d. assumption: this hinders the application of our algorithm on common benchmarks such as, e.g., the Sachs dataset \citep{sachs2025causal}, which doesn't dispose of multiple environments. 

\subsubsection{High dimensional graphs} 
{In \cref{app:exp_multivar} we analyse experiments over graphs with more than $2$ nodes. We find that, for linear Gaussian SCMs, our method can accurately infer the causal order of $50$ nodes with as few as three environments. However, for nonlinear structural causal models,  performance quickly deteriorates with the number of dimensions.} In general, we find that in the nonlinear setting, developing an effective algorithm for multivariate causal discovery with multiple environments is a challenging problem. This doesn't come as a surprise, being already well reported in the recent literature: \citet{reizinger2023jacobianbased} (Table 1) show that for graphs with $5$ nodes, neural-based contrastive learning from multiple views fails to even converge to a causal order on $40\%$ of the test runs; on $10$ nodes, convergence occurs with a $27\%$ rate. Perhaps even more remarkable are the findings of \citet{monti2019nonsens} (Figure 2) showing that, as the causal mechanisms become nonlinear, contrastive-based nonlinear ICA fails to recover the causal order better than a random baseline even for just two nodes. This highlights that algorithmic multi-environment causal discovery, even for small graphs, is an open and challenging problem that requires intensive research of its own--which is not in the scope of our paper. 

Despite the clear limitation, it is important to keep in mind that the goal of our experiments is to demonstrate that the assumptions of \cref{thm:identifiability}--our main contribution--are sufficient to identify the causal direction, and not to present novel algorithmic contributions. To this end, bivariate models are well-known to be the easiest yet non-trivial setting: in fact, our experimental setup is reminiscent of that of \citet{hoyer2008anm,zhang2009pnl}, two seminal papers in the identifiability theory of causality which limit their theoretical and empirical studies to bivariate causal graphs. This also aligns with several empirical and theoretical identifiability studies in causal discovery (e.g., \citet{mooij2011cyclic,ghassami2017learning,montagna2024demystifying,immer2022on,xi2025distinguishing,monti2019nonsens, strobl2023identifying}), which makes our choice {to focus on two-variable graphs} well-justified. We leave the challenge of developing an algorithm suitable for multi-environment causal discovery in higher dimensions as an open problem.


\section{Related works}\label{app:related_works}

\paragraph{Soft interventions and multiple environments for causal discovery.} Several works in the literature have addressed causal discovery identifiability and estimation via non i.i.d. data (stemming from soft interventions and multiple environments). \citet{peters2015invariant} and \citet{heinze2018invariant} identify the parents of a designated target node via invariance across environments, yielding partial identifiability of causal directions. They assume {linear and nonlinear additive noise models, respectively}. \citet{huang2020heterogeneous} use nonstationarity to recover the skeleton and orient some edges. \citet{perry2022causal} leverage sparse mechanism shifts, proving high-probability graph recovery with bounds that improve as the number of environments grows. {\citet{rothenhausler2015backshift} is the closest to our work, but their results are limited to linear models.} \citet{ghassami2017learning,ghassami2018multidomain} and \citet{heurtebise2025identifiablemultiviewcausaldiscovery}, similarly to our work, study identifiability of structural causal models from multiple environments, but their identifiability results are specialized to the linear case. Recently, \citet{jalaldoust2025multidomain} formulated a statistical test that can find a superset of the parents of a target node. \citet{yang2018characterizing,brouillard2020differentiable,jaber2020causal} characterize equivalence classes identifiability from interventions. From a methodological perspective, \citet{brouillard2020differentiable,ke2023neural} introduce differentiable approaches to causal discovery with interventions; \citet{mooij2020jci} propose a unifying framework for causal discovery from observational and multi-environment data. All of these results are complementary to our work, which is, to the best of our knowledge, the first to provide guarantees of identifiability of the causal graph from a finite number of auxiliary additional environments, potentially only two.

\paragraph{ICA and causal discovery.} The seminal work of \citet{shimizu2006alinear} shows that if an SCM can be expressed as a linear non-Gaussian ICA model, the underlying causal graph is identifiable. \citet{reizinger2023jacobianbased} generalize this to the nonlinear case. \citet{monti2019nonsens} show that time contrastive ICA \citep{hyvarinen2016time} can identify bivariate causal graphs with arbitrary nonlinear mechanism. The common ground of these findings is that they adapt the existing ICA identifiability theory to the problem of causal discovery. This approach is clearly important, especially in the light of the recent advancement in multi-environment ICA identifiability \citep{hyvarinen2019nonlinear,khemakhem2020variational,khemakhem2020icebeem,gresele2019incomplete,halva2020hidden,hyvarinen2017nonlinear,halva2021snica}; however, in the nonlinear setting, it fails to capture the gap between the two problems: while ICA attempts to recover the mixing function and the independent sources at each point, causal discovery concerns the much simpler problem of structure identifiability. Our work shows that this difference is key to demonstrating causal discovery identifiability from a constant number of sufficiently different environments, where ICA requires at least as many as the number of sources (see e.g. Theorem 1 in \citet{hyvarinen2016time}).


\section{Proof of the theoretical results}
\subsection{Preliminary theoretical results}
In this section, we collect the theoretical results useful for the proof of \cref{thm:identifiability}.

\begin{lemma}[Full rank of $\Omega_l$ under rescalings]
\label{lem:omega-full-rank}
Assume Gaussian sources $\bfS$ with independent coordinates, and environments generated by rescalings $\bfS^e=L_e\bfS$ with $L_e=\mathrm{diag}(\lambda_1^e,\dots,\lambda_d^e)$ and $\lambda_j^e\neq 0$.
For $l\in\{1,2\}$, recall the index sets $\mathcal{E}_1$ and $\mathcal{E}_2$, and
\[
\Omega_l \ :=\ \sum_{e\in \mathcal{E}_l}\Big(D_s^2\log p_\theta(\s)\ -\ D_s^2\log p_\theta^e(\s)\Big),
\]
evaluated at the same $\s$.
Then each $\Omega_l$ is diagonal with entries
\[
(\Omega_l)_{jj}\ =\ \frac{1}{\sigma_j^2}\left(\sum_{e\in \mathcal{E}_l}\frac{1}{(\lambda_j^e)^2}\ -\ |\mathcal{E}_l|\right),
\]
and therefore
\[
\Omega_l \text{ is full rank } \iff \forall j\in[d]:\ \sum_{e\in \mathcal{E}_l}\frac{1}{(\lambda_j^e)^2}\ \neq\ |\mathcal{E}_l|.
\]
\end{lemma}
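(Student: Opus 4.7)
The plan is to reduce everything to univariate calculations by exploiting the independence of the coordinates and the specific form of the Gaussian log-density. Since $\bfS$ has independent Gaussian components, the log-density factorizes as $\log p_\theta(\s) = \sum_{j=1}^d \log p_{\theta_j}(s_j)$ with each $p_{\theta_j} = \gN(\mu_j, \sigma_j^2)$. The Hessian $D_\s^2 \log p_\theta(\s)$ is therefore diagonal, and, crucially, constant in $\s$: its $(j,j)$-entry equals $-1/\sigma_j^2$. This immediately removes any dependence on where the Hessian is evaluated and reduces the proof to bookkeeping.

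Next I would compute the environment-$i$ term. Under \cref{ass:rescaling_envs}, $S_j^i = \lambda_j^i S_j$, so $S_j^i \sim \gN(\lambda_j^i \mu_j, (\lambda_j^i)^2 \sigma_j^2)$ with the components still independent. Hence $D_\s^2 \log p_\theta^i(\s)$ is also diagonal and constant, with $(j,j)$-entry $-1/((\lambda_j^i)^2 \sigma_j^2)$. Taking the difference at coordinate $j$ yields
\begin{equation*}
\bigl(D_\s^2\log p_\theta(\s) - D_\s^2\log p_\theta^i(\s)\bigr)_{jj} \ =\ \frac{1}{\sigma_j^2}\left(\frac{1}{(\lambda_j^i)^2} - 1\right),
\end{equation*}
and summing over $i \in I_l$ gives the claimed formula for $(\Omega_l)_{jj}$. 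Off-diagonal entries vanish because both Hessians in the difference are diagonal.

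The final step is immediate: a diagonal matrix is full rank iff all its diagonal entries are nonzero. Since $1/\sigma_j^2 \neq 0$, the condition $(\Omega_l)_{jj}\neq 0$ is exactly $\sum_{i\in I_l}(\lambda_j^i)^{-2} \neq |I_l|$, yielding the stated equivalence. There is no real obstacle here; the only thing to be careful about is making explicit that the Hessian of a Gaussian log-density is $\s$-independent, so the evaluation point plays no role and the diagonality survives the summation. This also matches the role of \cref{ass:invertible_omega} in \cref{thm:identifiability}, which is precisely the instantiation of the condition above with $|I_1|=e_1$ and $|I_2|=e_2$.
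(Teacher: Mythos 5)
Your proposal is correct and follows essentially the same route as the paper's proof: compute the constant diagonal Hessian $-1/\sigma_j^2$ for the base Gaussian and $-1/((\lambda_j^i)^2\sigma_j^2)$ for the rescaled environment, take the coordinatewise difference, sum over $i\in I_l$, and use that a diagonal matrix is full rank iff all diagonal entries are nonzero. Your extra remark that the Hessian is independent of the evaluation point is a harmless clarification, not a deviation.
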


\begin{proof}
For a univariate Gaussian, $D_{s_j}^2\log p(s_j)=-1/\sigma_j^2$. In environment $e$ we have $S_j^e=\lambda_j^e S_j$, so $S_j^e$ has variance $(\lambda_j^e\sigma_j)^2$, hence
$D_{s_j}^2\log p^e(s_j)=-1/(\lambda_j^e\sigma_j)^2$. Thus
\[
\big(D_{s_j}^2\log p(s_j) - D_{s_j}^2\log p^e(s_j)\big)
= \frac{1}{\sigma_j^2}\Big(\frac{1}{(\lambda_j^e)^2}-1\Big).
\]
Summing over $e\in \mathcal{E}_l$ gives the stated diagonal form. A diagonal matrix is full rank iff none of its diagonal entries is zero, which yields the equivalence.
\end{proof}

\begin{lemma}\label{lem:hat_omega_invertible}
    $\Omega_l$ is invertible implies $\widehat \Omega_l$ invertible.
\end{lemma}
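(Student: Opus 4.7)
The plan is to invoke \cref{lem:hess_diff} directly and then use the basic fact that conjugation by an invertible matrix preserves invertibility. By \cref{lem:hess_diff}, at the point $\mathbf x = \f(\s) = \widehat \f(\hat\s)$ with $\s = \mu_\bfS$, we have the identity
\[
\Jfx^T \Omega_l \Jfx \;=\; \widehatJfx^T \widehat\Omega_l \widehatJfx,
\]
for $l \in \{1,2\}$. So the whole content of the lemma is to observe that both outer matrices on each side are invertible, and then read the implication off.

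First, I would note that $\Jfx$ is invertible: by \cref{ass:bijective_diffble}, $\f$ is a diffeomorphism on $\R^d$, so its Jacobian (and hence the Jacobian of $\f^{-1}$) is nonsingular at every point. Similarly, $\widehat \f$ is a diffeomorphism (the alternative model lives in the space $\mathcal F$ of diffeomorphisms in \cref{def:identifiability}), so $\widehatJfx$ is invertible as well. In particular $\det(\Jfx) \neq 0$ and $\det(\widehatJfx) \neq 0$.

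Now assume $\Omega_l$ is invertible. Then $\Jfx^T \Omega_l \Jfx$ is a product of three invertible matrices, with determinant $\det(\Jfx)^2 \det(\Omega_l) \neq 0$, hence invertible. By the displayed identity, $\widehatJfx^T \widehat\Omega_l \widehatJfx$ is also invertible. Since $\widehatJfx$ is invertible, we can solve for
\[
\widehat\Omega_l \;=\; (\widehatJfx^T)^{-1}\bigl(\widehatJfx^T \widehat\Omega_l \widehatJfx\bigr)(\widehatJfx)^{-1},
\]
which, being a product of invertible matrices, is invertible. This is what we wanted.

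There is essentially no obstacle here: the only thing to be careful about is making sure the alternative mixing function $\widehat\f$ is assumed to be a diffeomorphism, which is built into \cref{def:identifiability}, so $\widehatJfx$ is automatically invertible and the elementary algebraic manipulation above suffices.
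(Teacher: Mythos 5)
Your proposal is correct and follows essentially the same route as the paper: both invoke \cref{lem:hess_diff} at $\s=\mu_\bfS$ and use the invertibility of $\Jfx$ and $\widehatJfx$ (guaranteed because $\f$ and $\widehat\f$ are diffeomorphisms) to transfer invertibility from the left-hand side to $\widehat\Omega_l$. The only cosmetic difference is that the paper concludes via the rank inequality $\operatorname{rank}(AB)\le\min(\operatorname{rank}A,\operatorname{rank}B)$, while you isolate $\widehat\Omega_l$ by explicit conjugation, which is an equally valid (arguably cleaner) way to finish.
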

\begin{proof}
    By \cref{lem:hess_diff}, for $l=1,2$, we have:
    $$
    \Jfx^T \Omega_l \Jfx = \widehatJfx^T \widehat\Omega_l \widehatJfx.
    $$
    Under \cref{ass:invertible_omega}, by \cref{lem:omega-full-rank} the LHS is a product of full rank matrices, and so is full rank; so must be the RHS. Given that $\operatorname{rank}(AB) \leq \min \left(\operatorname{rank}(A), \operatorname{rank}(B)\right)$ (for generic matrices $A, B$) we conclude that $\widehat \Omega_l$ is also full rank. 
\end{proof}

\subsection{Proof of Lemma \ref{lem:hess_diff}}\label{app:hess_lemma_proof}
We report the content of \cref{lem:hess_diff}, followed by its proof.
\HessianDiff*
\begin{proof}
    By direct computation, it can be verified that for each $e \in \{0\} \cup \mathcal{E}$, we have:
    \begin{equation}\label{eq:hess_x_diff}
        \begin{split}
            D^2_{\x} \log p^e(\mathbf x) &= D^2_{\x} \log |\Jfx| +  \Jfx^{T} D^2_{\s} \log p^e_\theta(\s)\Jfx \\
            &+ \sum_{k=1}^d \partial_{s_k}\log p^e_\theta(s_k) D^2_\x \f^{-1}_k(\x).
        \end{split}
    \end{equation}
    Given $\s = \mu_\bfS$,  \cref{ass:gaussianity} of Gaussianity, together with the fact that $\bfS^e$ is distributionally equivalent to $L_e \bfS$ for some diagonal $L_e$ (\cref{ass:rescaling_envs}), imply $\partial_{s_k} \log p^e_\theta(s_k)= 0$ for all $k$. Then, the summation vanishes. It follows that, for all environments $e \in \mathcal{E}$:
    $$
    D^2_{\x} \log p(\mathbf x) - D^2_{\x} \log p^e(\mathbf x) = \Jfx^{T} \left(D^2_{\s} \log p_\theta(\s) - D^2_{\s} \log p^e_\theta(\s)\right)\Jfx.
    $$
    The same results hold if we replace  $\f$ with $\widehat \f$ and $\theta$ with $\hat \theta$. Then, \cref{eq:hess1} follows summing the above over all $e \in \mathcal{E}_1$, and \cref{eq:hess2} follows summing over $e \in \mathcal{E}_2$.
\end{proof}

\subsection{Identifiability of the mean of the sources}
In this section, we show that under the assumptions of \cref{thm:identifiability}, the mean $\mu_{\bfS}$ of the sources is identifiable. 

\begin{proposition}[Identifiability of the sources mean]\label{prop:mean_identifiability}
    For each $e \in \mathcal{E}$, suppose the diagonal entries of the rescaling matrices $L_e$ generating the environments are randomly drawn from a joint distribution that is absolutely continuous with respect to the Lebesgue measure on $(\R\setminus{0})^{d|\mathcal{E}|}$. Then, the following is verified with probability one over the samples $\set{L_e}_{e \in \mathcal{E}}$:
    \begin{equation}
        \sum_{e=1}^k \nabla \log p(\x) - \nabla \log p^e(\x) = 0 \iff \s = \f^{-1}(\x) = \mu_\bfS.
    \end{equation}
\end{proposition}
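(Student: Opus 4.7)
The plan is to reduce the statement at the $\mathbf x$-level to an equivalent one at the $\mathbf s$-level via the change of variables, compute the $\mathbf s$-side sum in closed form under Gaussianity, and finally invoke a genericity argument on the rescaling coefficients. Concretely,
\[
\nabla_{\mathbf x}\log p^i(\mathbf x) = J_{\mathbf f^{-1}}(\mathbf x)^T\,\nabla_{\mathbf s}\log p_\theta^i(\mathbf s) + \nabla_{\mathbf x}\log\bigl|\det J_{\mathbf f^{-1}}(\mathbf x)\bigr|,
\]
and the Jacobian log-determinant term does not depend on $i$, so it cancels when taking environment differences. Since $J_{\mathbf f^{-1}}(\mathbf x)$ is invertible by \cref{ass:bijective_diffble}, the $\mathbf x$-level identity in the statement is equivalent to its source-side counterpart
\[
\sum_{i=1}^{k}\bigl[\nabla_{\mathbf s}\log p_\theta(\mathbf s)-\nabla_{\mathbf s}\log p_\theta^i(\mathbf s)\bigr]=0.
\]

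\textbf{Source-side calculation.} Combining \cref{ass:gaussianity} with \cref{ass:rescaling_envs}, exactly as in the proof of \cref{lem:hess_diff}, a direct calculation gives
\[
\partial_{s_j}\log p_\theta(s_j) - \partial_{s_j}\log p_\theta^i(s_j) = \frac{s_j-\mu_j}{\sigma_j^2}\left(\frac{1}{(\lambda_j^i)^2}-1\right).
\]
Summing over $i$, the $j$-th coordinate of the source-side sum factorizes as $(s_j-\mu_j)\,\alpha_j/\sigma_j^2$, where
\[
\alpha_j := \sum_{i=1}^{k}\left(\frac{1}{(\lambda_j^i)^2}-1\right).
\]
Hence the sum vanishes if and only if, for each $j\in[d]$, either $s_j=\mu_j$ or $\alpha_j=0$. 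The $(\Leftarrow)$ direction is immediate by plugging $\mathbf s=\mu_{\mathbf S}$, which zeroes every coordinate regardless of $\alpha_j$; for $(\Rightarrow)$, it suffices to rule out $\alpha_j=0$ for any $j$.

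\textbf{Genericity.} For each fixed $j$, the set $\{(\lambda_j^1,\dots,\lambda_j^k)\in(\mathbb R\setminus\{0\})^k:\sum_i(\lambda_j^i)^{-2}=k\}$ is the zero locus of a real-analytic function on $(\mathbb R\setminus\{0\})^k$ that does not vanish identically (e.g., at $\lambda_j^i\equiv 2$ the expression equals $k/4-k\neq 0$), and therefore it has Lebesgue measure zero. By the absolute-continuity hypothesis on the joint law of $\{L_i\}$, the event $\alpha_j=0$ has probability zero; a union bound over the finite index set $j\in[d]$ gives $\alpha_j\neq 0$ for every $j$ almost surely, completing the proof.

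\textbf{Main obstacle.} The only delicate step is the genericity argument: establishing that the hypersurface $\{\alpha_j=0\}$ is Lebesgue-null (which follows from real-analyticity together with one explicit non-vanishing point), and transferring this to a probability-zero statement via absolute continuity before the finite union bound over coordinates. The $\mathbf x$-to-$\mathbf s$ reduction and the Gaussian calculus closely mirror steps already used in \cref{lem:hess_diff}, so these parts are essentially routine.
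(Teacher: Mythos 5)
Your proof is correct and follows essentially the same route as the paper: reduce the $\mathbf x$-level identity to the source side via the score change-of-variables and invertibility of $J_{\mathbf f^{-1}}$, compute the Gaussian score differences so the $j$-th coordinate factors as $(s_j-\mu_j)\alpha_j/\sigma_j^2$, and rule out $\alpha_j=0$ by a genericity (measure-zero) argument followed by a finite union bound. The only cosmetic difference is the last step, where the paper invokes the regular level-set theorem while you use real-analyticity plus a non-vanishing witness; since $(\mathbb{R}\setminus\{0\})^{k}$ is disconnected this witness should formally be checked on every orthant, which is immediate here because the function depends only on the squares $(\lambda_j^i)^2$.
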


We introduce two lemmas instrumental to the proof of the proposition.

\begin{lemma}\label{lem:zero_score}
    Consider the base ICA model of \cref{eq:ica}, and let $e=1,...,k$ be the index denoting an auxiliary environment (cf. \cref{eq:environment}). Let Assumptions \ref{ass:bijective_diffble}-\ref{ass:gaussianity} to be satisfied. Given $\x = \f(\s)$ such that $\Jfx$ is full rank, for each $k \leq |\mathcal{E}|$:
    \begin{equation}
        \sum_{e=1}^k \nabla \log p(\x) - \nabla \log p^e(\x) = 0 \iff \sum_{e=1}^k \nabla \log p(\s) - \nabla \log p^e(\s) = 0
    \end{equation}
\end{lemma}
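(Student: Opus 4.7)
The plan is to reduce the claim to invertibility of $\Jfx^{T}$ via the density change-of-variables identity, exploiting that the mixing function $\f$ is shared across environments (\cref{def:environment}).

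First I would invoke the change-of-variables formula. For each $i \in \{0,1,\dots,k\}$, with the convention that $i=0$ denotes the base model ($p^0 = p$ and $p^0_\theta = p_\theta$), we have $p^i(\x) = p^i_\theta(\s)\,|\det \Jfx|$ with $\s = \f^{-1}(\x)$. Taking logarithms and differentiating in $\x$ by the multivariate chain rule gives
\begin{equation*}
\nabla_\x \log p^i(\x) \;=\; \Jfx^{T} \nabla_\s \log p^i_\theta(\s) \;+\; \nabla_\x \log|\det \Jfx|.
\end{equation*}
The second summand is independent of $i$, because $\f$ is shared across environments. Subtracting the equation for $i=0$ from the equation for a generic $i$ therefore cancels this term exactly, yielding
\begin{equation*}
\nabla_\x \log p(\x) - \nabla_\x \log p^i(\x) \;=\; \Jfx^{T}\bigl(\nabla_\s \log p(\s) - \nabla_\s \log p^i(\s)\bigr).
\end{equation*}

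Next I would sum over $i = 1, \dots, k$, factoring $\Jfx^{T}$ out of the sum since it is independent of $i$, to obtain
\begin{equation*}
\sum_{i=1}^k \bigl(\nabla \log p(\x) - \nabla \log p^i(\x)\bigr) \;=\; \Jfx^{T}\sum_{i=1}^k \bigl(\nabla \log p(\s) - \nabla \log p^i(\s)\bigr).
\end{equation*}
Because $\Jfx$ is assumed full rank, $\Jfx^{T}$ is invertible; hence the left-hand side vanishes if and only if the summed score-difference on the right vanishes, which is exactly the claimed biconditional.

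There is essentially no obstacle to carrying this out: the argument is a first-order analogue of the chain-rule computation in the proof of \cref{lem:hess_diff}, but strictly simpler, since at first order only the log-determinant term has to cancel — no higher-order corrections involving $D^2 \f^{-1}$ or the source scores $\nabla \log p^i_\theta(\s)$ appear, and in particular Gaussianity is not needed at this stage. The assumptions on Gaussian sources and on the rescaling coefficients will only enter at the next step (\cref{prop:mean_identifiability}), where one pins down the unique zero in $\s$ of the summed score-difference to be the mean $\mu_\bfS$.
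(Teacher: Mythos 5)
Your proposal is correct and follows essentially the same route as the paper's proof: apply the change-of-variables formula for the score, cancel the shared log-determinant term by differencing against the base environment, sum over $i$, and use full rank of $\Jfx$ (equivalently, invertibility of $\Jfx^{T}$) to get the biconditional. Your observation that Gaussianity is not needed at this step is also consistent with how the paper's argument actually proceeds.
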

\begin{proof}
    By the change of variable formula for densities, we obtain the score of $\x$ for a generic environment $e=0,...,k$ (as usual, $p = p^0$):
    $$
    \nabla \log p^e(\x) = \Jfx^T \nabla \log p^e(\s) + \nabla \log |\Jfx|.
    $$
    Then, for each $e=1,...,k$:
    $$
    \nabla \log p(\x) - \nabla \log p^e(\x) = \Jfx^T\left[\nabla \log p(\s) - \nabla \log p^e(\s)\right].
    $$
    Taking the summation:
    $$
     \sum_{e=1}^k \nabla \log p(\x) - \nabla \log p^e(\x) = \sum_{e=1}^k \Jfx^T\left[\nabla \log p(\s) - \nabla \log p^e(\s)\right].
    $$
    From the above equation, the right-to-left implication trivially holds. Considering the other direction we have:
    $$
    \sum_{e=1}^k\nabla \log p(\x) - \nabla \log p^e(\x) = 0 \implies \sum_{e=1}^k \Jfx^T\left[\nabla \log p(\s) - \nabla \log p^e(\s)\right] = 0.
    $$
    Being the Jacobian of the inverse mixing function a full rank matrix, its null space is the zero vector, which implies:
    $$
    \sum_{e=1}^k \nabla \log p(\s) - \nabla \log p^e(\s) = 0.
    $$
\end{proof}

\begin{lemma}
\label{lem:zero_score_mean}
Consider the base ICA model $\bfX = \f(\bfS)$ of \cref{eq:ica}. Let $e=1,...,k$ be the index of the auxiliary environment $\bfX^e = \f(\bfS^e)$, with $\bfS^e = L_e\bfS$, $L_e=\mathrm{diag}(\lambda_1^e,\dots,\lambda_d^e)$, and $\lambda_j^e\neq 0$. Let Assumptions \ref{ass:bijective_diffble} and \ref{ass:gaussianity} be satisfied. Assume the joint law of $\{ \lambda_j^e: j=1,\dots,d,\ e=1,\dots,k\}$ is absolutely continuous with respect to Lebesgue measure on $(\mathbb R\setminus\{0\})^{dk}$. Then, for each $k \leq |\mathcal{E}|$, the following holds with probability one over $\{L_e\}_{e=1}^k$ samples:
     \begin{equation}
        \sum_{e=1}^k \nabla \log p(\s) - \nabla \log p^e(\s) = 0 \iff \s = \f^{-1}(\x) = \mu_\bfS.
     \end{equation}
\end{lemma}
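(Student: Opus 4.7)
The plan is to use Gaussianity to reduce the vanishing-score condition to a decoupled system of $d$ scalar linear equations in the shifted coordinates $s_j-\mu_{S_j}$, and then invoke absolute continuity of $\{\lambda_j^i\}$ to show that the slopes of those linear equations are almost surely nonzero, pinning $\s$ uniquely to $\mu_\bfS$.

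First I would write out each score coordinate explicitly. By independence and Gaussianity, $\partial_{s_j}\log p(s_j)=-(s_j-\mu_{S_j})/\sigma_j^2$, and, following the mean-preserving rescaling convention already used in the proof of \cref{lem:hess_diff} (which posits $\nabla\log p^i(\mu_\bfS)=\mathbf{0}$ for every $i$), the $j$-th marginal of $\bfS^i=L_i\bfS$ is Gaussian with mean $\mu_{S_j}$ and variance $(\lambda_j^i)^2\sigma_j^2$, so $\partial_{s_j}\log p^i(s_j)=-(s_j-\mu_{S_j})/[(\lambda_j^i)^2\sigma_j^2]$. Subtracting and summing over $i$ gives
\begin{equation*}
\Bigl[\sum_{i=1}^k\nabla\log p(\s)-\nabla\log p^i(\s)\Bigr]_j \;=\; c_j\,(s_j-\mu_{S_j}),\qquad c_j:=\frac{1}{\sigma_j^2}\Bigl[\sum_{i=1}^k\frac{1}{(\lambda_j^i)^2}-k\Bigr].
\end{equation*}
The right-to-left implication is then immediate, and what remains is to show that almost surely $c_j\neq 0$ for every $j$.

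The key step is a genericity argument. The map $(\lambda_j^1,\dots,\lambda_j^k)\mapsto\sum_{i=1}^k 1/(\lambda_j^i)^2-k$ is real-analytic on $(\R\setminus\{0\})^k$ and is not identically zero (e.g.\ it equals $3k$ at $(1/2,\dots,1/2)$), so its zero set is a proper analytic subvariety and in particular has Lebesgue measure zero in $(\R\setminus\{0\})^k$. The assumed absolute continuity of the joint law of $\{\lambda_j^i:i\in[k],\,j\in[d]\}$ with respect to Lebesgue measure on $(\R\setminus\{0\})^{dk}$ then yields $P(c_j=0)=0$ for each fixed $j$, and a union bound over the $d$ coordinates gives $P(c_j\neq 0\text{ for all }j)=1$. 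On this full-measure event, the decoupled system $c_j(s_j-\mu_{S_j})=0$, $j\in[d]$, has the unique solution $\s=\mu_\bfS$, which finishes both directions of the equivalence.

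The main subtlety, rather than a genuine technical obstacle, is keeping the mean convention consistent: the argument above goes through cleanly only because the summed difference of scores has no constant term, which is precisely the condition $\nabla\log p^i(\mu_\bfS)=\mathbf{0}$ built into \cref{lem:hess_diff}. Were one instead to interpret $\bfS^i$ as Gaussian with mean $L_i\mu_\bfS$, an additional additive constant in $\{\lambda_j^i\}$ would appear in the display and one would need a second, entirely analogous genericity argument to eliminate it; the conclusion would be unchanged, but the bookkeeping slightly heavier.
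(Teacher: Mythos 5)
Your proof is correct and follows essentially the same route as the paper: the same coordinate-wise Gaussian-score computation reducing the condition to $c_j(s_j-\mu_{S_j})=0$ with $c_j\propto\sum_{i}(\lambda_j^i)^{-2}-k$, followed by a genericity argument that $c_j\neq 0$ almost surely (the paper delegates this to \cref{prop:assumption5_as} via a regular-level-set argument, whereas you use real-analyticity of the zero set, which is an equally valid measure-zero justification). Your remark on the mean-preserving convention for the rescaled sources matches the convention the paper implicitly adopts in \cref{lem:hess_diff} and in its own proof of this lemma.
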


\begin{proof}
The backward direction is immediate, due to the Gaussianity assumption. Let's focus on the forward implication.
$$
\sum_{e=1}^k \nabla \log p(\s) - \nabla \log p^e(\s) = 0 \iff \sum_{e=1}^k \partial_{s_j} \log p(s_j) - \partial_{s_j}  \log p^e(s_j) = 0, \hspace{.5em} \forall j=1,...,d.
$$
We denote with $\mu_j,\sigma_j^2$ respectively the mean and variance of $S_j$, and define $\lambda^{0}_j := 1$.
For each $e=0,...,k$ we have:
$$
\partial_{s_j} \log p^e(s_j) = \frac{\mu_j - s_{j}}{(\lambda_j^e \sigma_j)^2}.
$$
Then:
$$
\sum_{e=1}^k \partial_{s_j} \log p(s_j) - \partial_{s_j}  \log p^e(s_j) = \frac{\mu_j - s_{j}}{\sigma_j^2}\left(k-  \sum_{e=1}^k \frac{1}{(\lambda_j^e)^2}\right).
$$
Therefore, the sum vanishes if and only if for every $j$, either $s_j=\mu_j$ or
\(
\sum_{e=1}^k (\lambda_j^e)^{-2}=k.
\)
By \cref{prop:assumption5_as}, $\sum_{e=1}^k (\lambda_j^e)^{-2}=k$ occurs with probability zero, and thus the claim is verified.

\end{proof}

We are ready to prove the proposition. 
\begin{proof}[Proof of \cref{prop:mean_identifiability}] By \cref{lem:zero_score} we have that for each $k \leq |\mathcal{E}|$:
$$
\sum_{e=1}^k \nabla \log p(\x) - \nabla \log p^e(\x) = 0 \iff \sum_{e=1}^k \nabla \log p(\s) - \nabla \log p^e(\s) = 0
$$
Then, the result follows by application of \cref{lem:zero_score_mean}.
\end{proof}
\subsection{Proof of Theorem \ref{thm:identifiability}}\label{app:thm1_proof}
We repropose the statement of \cref{thm:identifiability}, followed by a detailed proof.
\Identifiability*
\begin{proof}
    By \cref{lem:hess_diff}, for $l=1,2$ we have:
    \begin{equation*}
            \Jfx^T \Omega_l \Jfx = \widehatJfx^T \widehat\Omega_l \widehatJfx,
    \end{equation*}
    which implies
    \begin{equation}\label{eq:momega}
    M^T\Omega_l M = \widehat \Omega_l, \quad l=1,2,
    \end{equation}
    where $M:= J_{\h^{-1}}(\hat \s)$. By \cref{lem:omega-full-rank}, $\Omega_l$ is invertible, which also implies $\widehat \Omega_l$ invertibility (by \cref{lem:hat_omega_invertible}). Then, we can define $A := \widehat \Omega_1^{-1} \widehat \Omega_2$ and $B := \Omega_1^{-1} \Omega_2$. From \cref{eq:momega} it follows:
    \begin{equation}\label{eq:similarity}
        A = M^{-1}BM,
    \end{equation}
    which implies that $A$  and $B$ are similar, implying that they have the same set of eigenvalues. Take $\lambda, \mathbf v$ eigenvectors of $A$. Then, the following chain of implication holds:
    \begin{equation}\label{eq:m_eigenvectors_map}
            A\mathbf v = \lambda \mathbf v \iff MA\mathbf v = \lambda M\mathbf v
            \iff BM\mathbf v = \lambda M\mathbf v,
    \end{equation}
    where the last step follows from \cref{eq:similarity}. So, $M$ is mapping from eigenvectors of $A$ to eigenvectors of $B$. The next step is showing that each eigenspace of $A$ and $B$ is always spanned by one vector in the standard basis. As a preliminary step, we show that the diagonal elements of $A$ are pairwise distinct: first, by similarity, we have that $A$ and $B$ have the same eigenvalues. Being both matrices diagonal, the eigenvalues are the diagonal elements. Then:
    \begin{equation}\label{eq:omega_hatomega_equality}
    A_{ii} = \frac{(\widehat\Omega_1)_{ii}}{(\widehat\Omega_2)_{ii}} = \frac{(\Omega_1)_{jj}}{( \Omega_2)_{jj}} = B_{jj}, \hspace{1em}i,j=1,...,d.
    \end{equation}
    By assumption, we have that the elements in the set $\set{\frac{(\Omega_1)_{\ell\ell}}{(\Omega_2)_{\ell\ell}}}_{\ell \in [d]}$ are pairwise distinct. The above equation implies the same for the set $\set{\frac{(\widehat \Omega_1)_{\ell\ell}}{(\widehat \Omega_2)_{\ell\ell}}}_{\ell \in [d]}$, i.e., for each $i=1,...,d$:
    \begin{equation}\label{eq:ai_neq_aj}
        A_{ii} \neq A_{jj}, \hspace{1em} \forall j=1,...,d, j\neq i. 
    \end{equation}
    Now consider the eigenvalue $\lambda$ of $A$: we show that the associated eigenspace is equal to the span of a single vector in the standard basis. Being $A$ diagonal, there is $i=1,...,d$ such that $\lambda = A_{ii}$. Consider the eigenvector $\mathbf v = (v_1,...,v_d)$ such that:
    \begin{equation}\label{eq:proof_A_eigeneq}
        A\mathbf v = \lambda \mathbf v = A_{ii}\mathbf v.
    \end{equation}
    Being $A$ diagonal, for each $j=1,...,d$, component-wise we have:
    \begin{equation}\label{eq:proof_A_eigeneq_component}
        (A\mathbf v)_j = A_{jj} v_j.
    \end{equation}    \cref{eq:proof_A_eigeneq,eq:proof_A_eigeneq_component} together imply:
    $$
    A_{ii}v_j = A_{jj}v_j \iff (A_{ii} - A_{jj})v_j= 0, \hspace{1em}\forall j=1,...,d.
    $$
    By \cref{eq:ai_neq_aj}, for $i\neq j$, $A_{ii} \neq A_{jj}$, meaning that $v_j = 0$. Then, $\mathbf v$ eigenvector of $A$ must be aligned with the basis vector $\mathbf e_i$:
    \begin{equation}
        E_{\lambda}(A) = \operatorname{span}\set{\mathbf e_i}.
    \end{equation}
    With analogous computations, we find:
    \begin{equation}
        E_{\lambda}(B) = \operatorname{span}\set{\mathbf e_j},
    \end{equation}
    with $\mathbf e_j$ potentially different from $\mathbf e_i$. Given that by \cref{eq:m_eigenvectors_map} we have $M E_{\lambda}(A) = E_{\lambda}(B)$, the last two equations imply 
    $$
    M \operatorname{span}\set{\mathbf e_i} = \operatorname{span}\set{\mathbf e_j}, \hspace{1em} M = J_{\h}(\s).
    $$
    We conclude that $J_{\h}(\s)$ maps one vector in the standard basis to another (up to rescaling), proving that $J_{\h}(\s)=DP$ with $D$ invertible diagonal and $P$ permutation. We recall that by \cref{eq:jacob_indet} we have $J_{\f} = J_{\widehat \f} J_{\h}$, s.t.  
    $$
    J_{\f^{-1}}(\x) = P^TD^{-1}J_{\widehat \f^{-1}}(\x).
    $$
    By Lemma 1 in \citet{reizinger2023jacobianbased}, the permutation indeterminacy can be uniquely determined and thus removed. Given that by \cref{ass:faithful_mean} the Jacobian of $J_{\f^{-1}}(\x)$ is faithful to the causal graph, the claim is verified.    
    \end{proof}

\section{Independent component analysis}\label{app:ica}
In this section, we present a primer on the problem of Independent Component Analysis (ICA), based on the content of Section 2 in \citet{buchholz2022function}. ICA seeks to recover latent \emph{sources} from their observed mixtures. We assume a hidden random vector $\mathbf{S}\in\mathbb{R}^d$ with independent coordinates and observations generated by
\begin{equation}
\mathbf{X} \;=\; \mathbf{f}(\mathbf{S}), 
\qquad
p(\mathbf{s}) \;=\; \prod_{i=1}^d p_i(s_i),
\label{eq:ica-model}
\end{equation}
where $\mathbf{f}:\mathbb{R}^d\to\mathbb{R}^d$ is a diffeomorphism. The goal of ICA is to find an \emph{unmixing} map $\widehat \f^{-1}:\mathbb{R}^d\to\mathbb{R}^d$ such that the components of $\widehat \f^{-1}(\mathbf{X})$ are independent—ideally achieving blind source separation (BSS), meaning $\widehat \f^{-1}\approx \mathbf{f}^{-1}$ up to standard symmetries. Informally, for $\hat \s = \widehat \f^{-1}(\x)$, we call $\widehat \f$ an ICA solution when
$$
\widehat \f(\hat \s) \stackrel{D}{=} \f(\s)
$$
(equality is in distribution). In general,  we would like an ICA solution to be as close as possible to the real function $\f$. To formalize this concept, known as \textit{identifiability}, let $\mathcal{F}(\mathcal{A},\mathcal{B})$ be a class of invertible maps $\mathcal{A}\to\mathcal{B}$ (assumed diffeomorphisms) and let $\mathcal{P}\subset \mathcal{M}_1(\mathbb{R})^{\otimes d}$ be a family of product measures. Let $\mathcal{S}$ denote the group of admissible \emph{symmetries} (e.g., permutations and coordinate-wise rescalings) up to which we agree to identify sources.

\begin{definition}[Identifiability]\label{def:ica_identifiability}
ICA in $(\mathcal{F},\mathcal{P})$ is \emph{identifiable up to} $\mathcal{S}$ if, for any $\mathbf{f},\widehat \f\in \mathcal{F}$ and $P,\hat P\in \mathcal{P}$,
\begin{equation}
\mathbf{f}(\mathbf{S})\;\stackrel{D}{=}\;\widehat \f(\hat \bfS) 
\quad\text{with }\;\mathbf{S}\sim P,\;\hat \bfS \sim \hat P,
\label{eq:same-dist}
\end{equation}
implies the existence of $\mathbf{h}\in \mathcal{S}$ such that $\mathbf{h}=\widehat \f^{-1}\!\circ \mathbf{f}$ on the support of $P$.
\end{definition}
In general (i.e., for $(\mathcal F, \mathcal P)$ arbitrarily large), the ICA problem is not identifiable for reasonable $\mathcal S$. Notable example comes from the Darmois construction or constructions based on measure-preserving transformations. Several results in the literature have studied which conditions on $(\mathcal F, \mathcal P)$ can help identifiability. Most notably, \citet{buchholz2022function} shows that when $\mathcal F$ represents the class of conformal maps, identifiability is guaranteed up to trivial indeterminacies. If heterogeneous data are considered (e.g., in the multi-environment setting of this paper), identifiability was shown in the general case \citep{hyvarinen2016time}.


\section{Experiments appendix}

\subsection{Computational resources}\label{app:computational_resources}
All experiments have been run on a personal laptop, a Lenovo ThinkPad T14 Gen 5, for a run time of approximately $6$ hours.

\subsection{Structural causal model identifiability from observational data}\label{sec:observational_identifiability}
Without sufficiently restrictive modeling assumptions, causal discovery is ill-posed: the distribution of the data is compatible with many distinct graphs that define an equivalence class, the most one can hope to identify in the general case with i.i.d. observations. Unique graph recovery requires restrictions on the class of functional mechanisms and noise distributions of the underlying causal model: in what follows, we briefly introduce the four classes of causal models that are known to be identifiable. We always assume that the underlying graph is a DAG.

\paragraph{Linear Non-Gaussian Model (LiNGAM).}
A linear SCM over $\mathbf X \in \R^d$ is defined by
\begin{equation}
    \mathbf{X} = B\mathbf{X} + \mathbf{S},
    \label{eq:lingam}
\end{equation}
where $B \in \mathbb{R}^{d \times d}$ collects the coefficients expressing each $X_i$ as a linear function of its parents plus a disturbance $S_i$. With mutually independent, non-Gaussian noise terms, the model is identifiable; this is known as the Linear Non-Gaussian Acyclic Model (LiNGAM) \citep{shimizu2006alinear}.
\paragraph{Additive Noise Model (ANM).}
An Additive Noise Model (ANM) \citep{hoyer2008anm,peters2014causal} defines each causal variable as a function of (potentially) nonlinear mechanisms and an additive noise contribution:
\begin{equation}
    X_i \coloneqq f_i(\Parents_i) + S_i,\quad i=1,\ldots,d.
\end{equation}
The noise terms are required to be mutually independent.
\paragraph{Post-Nonlinear Model (PNL).}
The most general class with known sufficient conditions for identifiability of the graph is the Post-Nonlinear (PNL) model \citep{zhang2009pnl}, in which
\begin{equation}
    X_i \coloneqq g_i\!\big(f_i(\Parents_i) + S_i\big),\quad i=1,\ldots,d,
\end{equation}
with $f_i$ and $g_i$ both potentially nonlinear, $g_i$ invertible, and mutually independent noises.
\paragraph{Location Scale Noise Model (LSNM)} The LSNM \citep{immer2022on} extends ANMs by allowing heteroscedastic noise as follows:
\begin{equation}
    X_i \coloneqq f_i(\Parents_i) + g_i(\Parents_i)\, S_i,\quad i=1,\ldots,d,
    \label{eq:lsnm}
\end{equation}
where $f_i$ and $g_i>0$ may be nonlinear and noise terms are jointly independent with zero mean and unit variance. 

\subsection{Detailed pseudocode of Algorithm \ref{alg:jacobian_support_sketch}}\label{app:algorithm}
\cref{alg:jacobian_support} provides a detailed pseudocode of the algorithm adopted in our experiments of \cref{sec:experiments}, and sketched in \cref{alg:jacobian_support_sketch}.
\begin{algorithm}[ht]
\caption{Estimating $\supp J_{\mathbf f^{-1}}$ from the data}
\label{alg:jacobian_support}
\DontPrintSemicolon      
\begin{spacing}{1.15}
\KwData{$\mathcal D \in \R^{k \times n \times d}$ \tcp*[r]{$\forall$ env: $n$ d-dimensional observations.} 
\hspace{2.7em}$\mathcal{E}_1, \mathcal{E}_2 \subset [k]$ \tcp*[r]{Set of indices splitting the environments in two groups}} 
\KwResult{Estimate of $\supp J_{\f^{-1}}$}
 $\widehat S \leftarrow \operatorname{score\_estimate}(\mathcal D) \in \R^{k \times n \times d}$\\
$\widehat H \leftarrow \operatorname{hess\_estimate}(\mathcal D) \in \R^{k \times n \times d \times d}$\\
$\operatorname{mean\_pairs\_idxs} \in \R^{k \times 2}$ \tcp*[r]{Pair of indices corresponding to observations at the mean}
\vspace{1em}
\tcp*[h]{For each environment $e$, find $i$ s.t. $\f^{-1}(X[e,i]) \approx \mu_{\bfS}$}\\
\For{$e=1,...,k$}{
    $\Delta_{X} \in \R^{n \times n}$ \tcp*[r]{norm of the difference of observations from distinct envs}
    $\operatorname{pairs} \in \mathbb N^{n}$ \tcp*[r]{Pair of indices $i,j$ such that $X[0,i] \approx X[e,j]$}
    $\operatorname{score\_diffs}\leftarrow +\infty \in \R^{n}$ \tcp*[r]{Container for norm of the differences in the score}
    \For{i = 1,...,n}{
        \For{j=1,...,n}{
            $\Delta_{X}[i,j] \leftarrow ||X[0,i]-X[e,j]||_2$
        }
        $j\leftarrow \argmin \Delta_{X}[i]$\\
        $\operatorname{pairs}[i] \leftarrow j$ \tcp*[r]{$X[0,i] \approx X[e,j]$}
        $\operatorname{score\_diffs}[i] \leftarrow ||\widehat S[0,i]-\widehat S[e,j]||_2$
    }
    $m \leftarrow \argmin \operatorname{score\_diffs}$ \tcp*[r]{\tiny{Paired observations between envs $(0, e)$ s.t. score diff. $\approx0$.}}
    $\operatorname{mean\_pairs\_idxs}[e] \leftarrow  m, \operatorname{pairs}[m]$ \tcp*[r]{The score diff. vanishes when source $=$ mean}
}
\vspace{1em}
\tcp*[h]{Difference of Hessians at the mean (i.e. \cref{eq:hess1,eq:hess2})}\\
$\widehat H_{\textnormal{diffs}} \leftarrow 0 \in \R^{2 \times d \times d}$\\
\For{$\ell=1,2$}{
    \For{$e \in \mathcal{E}_\ell$}{
    $m_1, m_e \leftarrow \operatorname{mean\_pairs\_idxs}[e]$\\
        $\Delta_H = \widehat H[0,m_1] - \widehat H[e,m_e]$ \\
        $\widehat H_{\textnormal{diffs}}[\ell] \leftarrow \widehat H_{\textnormal{diffs}}[\ell] + \Delta_H$.
    }
}
$M \leftarrow \widehat H_{\textnormal{diffs}}^{-1}[1]\widehat H_{\textnormal{diffs}}[2] \approx J_{\f} \Omega_1^{-1}\Omega_2 J_{\f^{-1}}$ \tcp*[r]{$H_{\textnormal{diffs}}[\ell]  \approx J_{\f^{-1}}^T\Omega_\ell J_{\f^{-1}}$, by \cref{eq:hess1,eq:hess2}}
$\widehat J_{\f^{-1}} \leftarrow \textnormal{diagonalize}(M) \approx J_{\f^{-1}}DP$ \\
\Return{$\supp \left(\widehat J_{\f^{-1}} P^{-1}\right)$} \tcp*[r]{$P$ can be found using the acyclicity of the causal graph.}
\end{spacing}
\end{algorithm}

\subsection{Experiments beyond Gaussianity}\label{app:beyond_gaussian}
In this section, we present additional experimental results on bivariate graphs underlying synthetically generated structural causal models. The causal mechanisms are the same already described in \cref{sec:data}. The difference, here, is that we generate the independent sources from a Gamma distribution, which violates the assumptions of our theory. We sample the scale parameter $\theta \sim U(1.75, 2.25)$, and consider two different parameterizations of the shape $\alpha$ of the base environments: in the first case, $\alpha \sim U(0.5, 1)$; in the second case $\alpha \sim U(2, 2.5)$. What makes the Gamma density interesting it that it can be flexibly modified by changing the values of its parameters, as shown in \cref{fig:gamma}.

\paragraph{Gamma distribution with no vanishing gradient.} \cref{fig:alpha1} illustrate how the Gamma density function varies at $\alpha=1$ and different values of $\theta$. It is interesting to note that the gradient of the density function never vanishes, making this setup adversarial to the assumptions of \cref{thm:identifiability}. In line with this, in \cref{fig:gamma_exp_alpha1} we see that generally our algorithm struggles to infer the causal direction for this class of structural causal models.

\paragraph{Gamma distribution with vanishing gradient.} \cref{fig:alpha2} illustrates how the Gamma density function varies at $\alpha=2$ and different values of $\theta$. We can see that, in this case, the density achieves a maximum: we point to our analysis in \cref{par:thm1_beyond_gaussianity} (the paragraph \textit{\cref{thm:identifiability} beyond Gaussianity}), where we discuss when and why it is reasonable to expect that \cref{thm:identifiability} extends to any source distribution that achieves a maximum or minimum in the interior of its domain. A word of caution is needed: despite the fact that the Gamma density with $\alpha \in [2, 2.5]$ does have a vanishing gradient, the points of the domain at which the critical values occur are not preserved by our rescaling interventions (as is clear by inspection of \cref{fig:alpha2}). Hence, the requirements of the \cref{thm:identifiability} are not fully met (where it's implicit that the rescaling interventions do not change the location of the modes): this makes the experiments of \cref{fig:gamma_exp_alpha2} an interesting challenge for our algorithm. The outcomes are exciting: we see that increasing the number of available environments, despite the assumption violations, imposes enough constraints to infer the causal direction in the majority of the experimental setups with $\approx 80\%$ accuracy. This is of double interest: first, we have some empirical evidence supporting the hypothesis that our theory can be extended beyond Gaussianity. Second, we see that this seems to be achieved thanks to the constraints from many environments, in contrast with what we observe when experiments are run on SCMs with Gaussian noise (\cref{fig:gaussian_experiments}), where increasing environments do not translate into better accuracy. These empirical findings, despite being preliminary, should provide an incentive to pursue identifiability theory beyond Gaussianity. 

\paragraph{}

\begin{figure}[t]
  \centering
  \begin{subfigure}{0.48\textwidth}
    \centering
    \includegraphics[width=\linewidth]{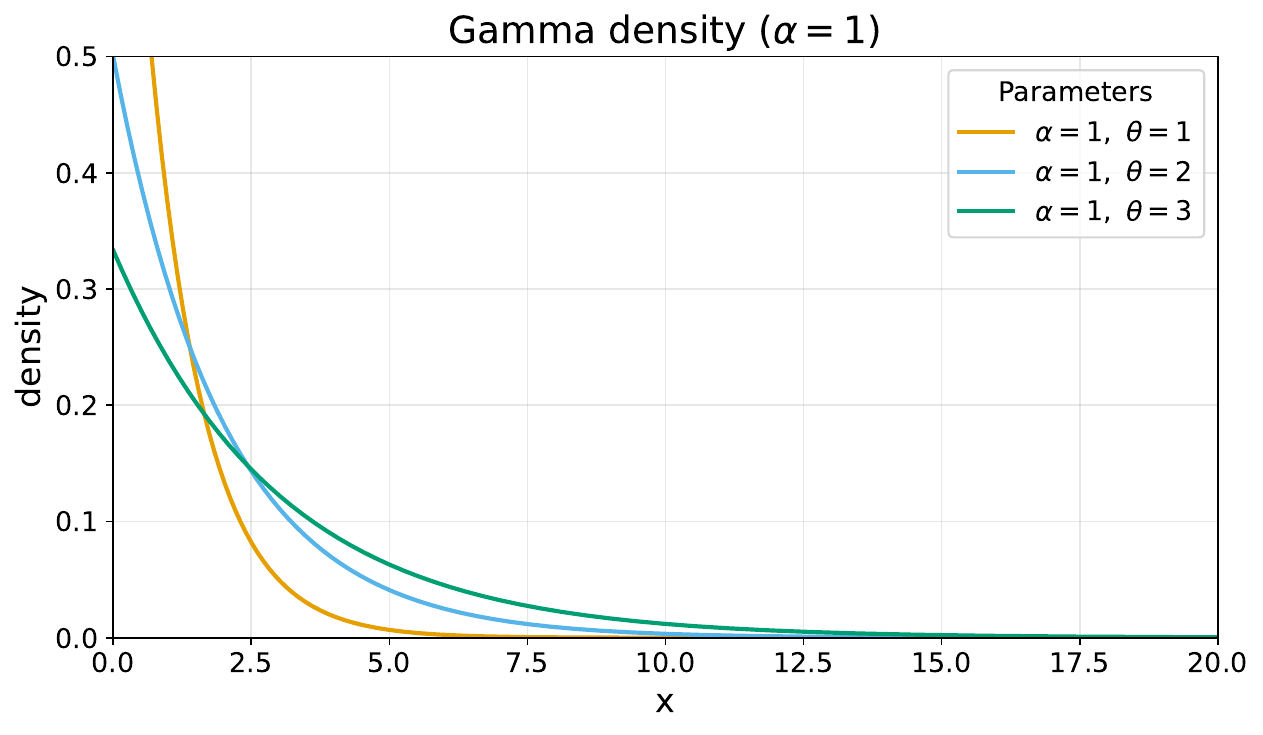}
    \caption{Gamma density with $\alpha=1$.}
    \label{fig:alpha1}
  \end{subfigure}\hfill
  \begin{subfigure}{0.48\textwidth}
    \centering
    \includegraphics[width=\linewidth]{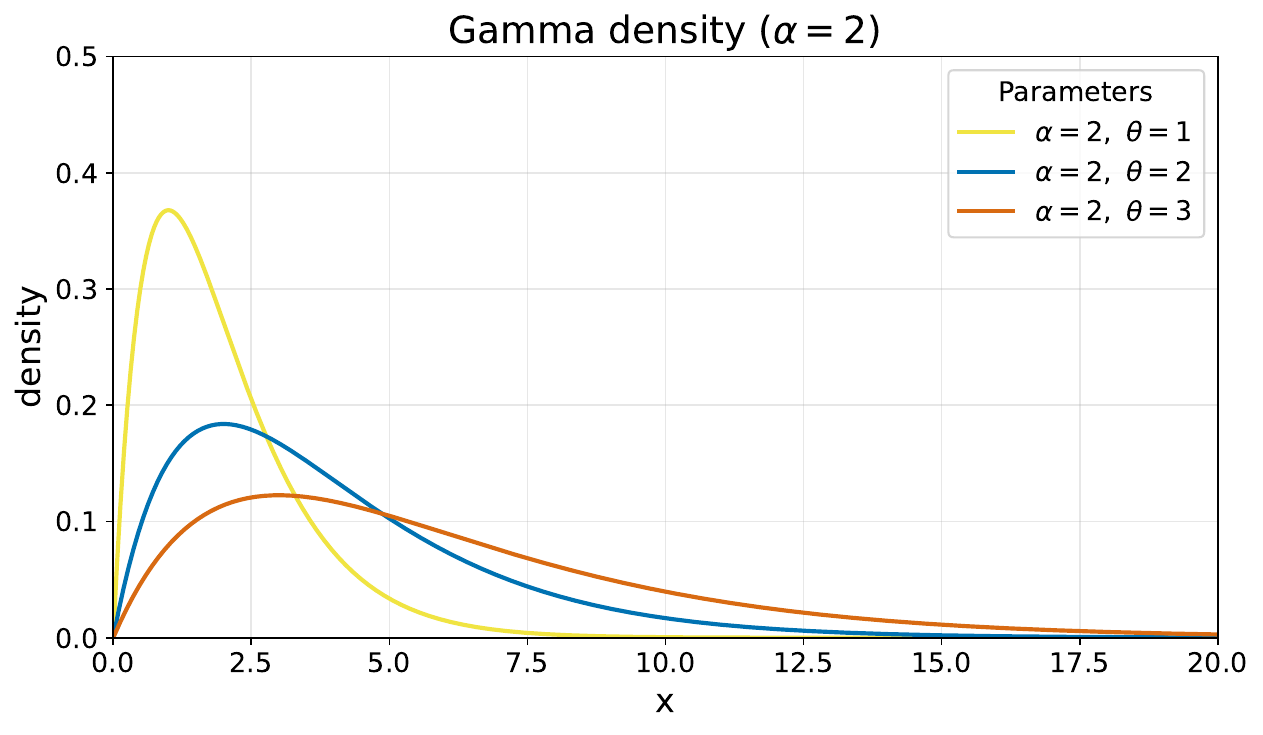}
    \caption{Gamma density with $\alpha=2$.}
    \label{fig:alpha2}
  \end{subfigure}
  \caption{We plot the Gamma density for different values of shape and scale. The left plot fixes the shape $\alpha=1$; the right plot fixes $\alpha=2$. We let $\theta$ vary to illustrate how the distribution changes between the rescaling environments of our experiments. We note that for $\alpha=1$ the density doesn't have a finite critical point.}
  \label{fig:gamma}
\end{figure}

\begin{figure}
    \centering
    \includegraphics[width=0.7\linewidth]{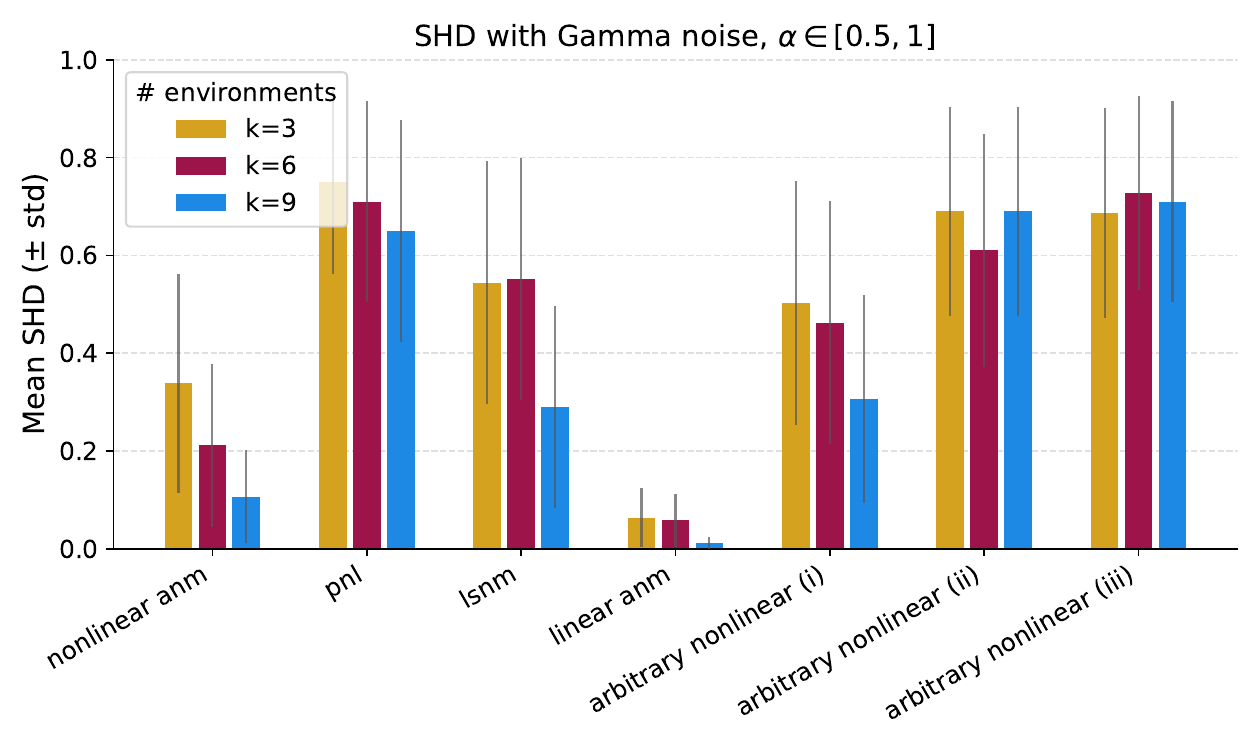}
      \caption{Average SHD ($0$ is best, $1$ is worst) achieved by \cref{alg:jacobian_support_sketch} over $50$ seeds  on binary graphs. The sources are sampled from a gamma distribution with $\alpha \in [0.5, 1]$. In line with our theory, when the sources are generated according to a density that doesn't have critical points, our algorithm generally fails to infer the causal direction.} 
    \label{fig:gamma_exp_alpha1}
\end{figure}

\begin{figure}
    \centering
    \includegraphics[width=0.7\linewidth]{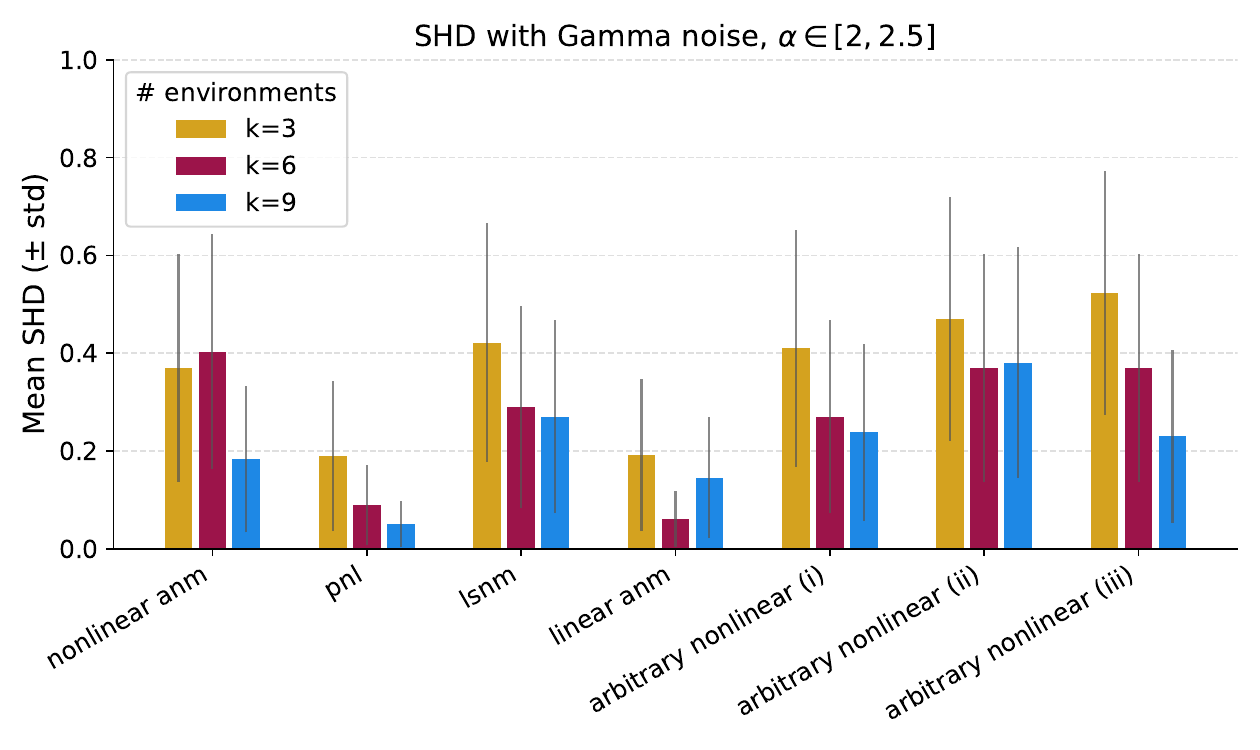}
      \caption{Average SHD ($0$ is best, $1$ is worst) achieved by \cref{alg:jacobian_support_sketch} over $50$ seeds  on binary graphs. The sources are sampled from a gamma distribution with $\alpha \in [2, 2.5]$, which guarantees at least one point where the gradient of the log-likelihood vanishes (see \cref{fig:alpha2}). Interestingly, this appears to enable accurate inference of the causal graph when the number of environments increases.} 
    \label{fig:gamma_exp_alpha2}
\end{figure}

{\subsection{Experiments on higher dimensional graphs}\label{app:exp_multivar}}
In this section, we present and analyse experimental results on graphs in dimensions higher than $2$. Our finding shows that, according to our theory, $2$ sufficiently different auxiliary environments are enough to infer about the causal order, even in cases known to be non-identifiable with pure observations. 

\paragraph{\textbf{Metric.}} We monitor the error in the inferred causal order via the topological order divergence, first adopted in \citet{rolland2022score}. Given a directed acyclic graph with $d$ nodes, a causal order (or \textit{topological} order) is a permutation of the set $[d]$ such that a node in the ordering can be a parent
only of the nodes appearing after it in the same ordering. For example, the only graphs compatible with the topological order $\set{2, 1}$ are $X_2 \to X_1$ or the empty graph. Consider a causal order $\hat \pi$, and a binary adjacency matrix $A$ representing a directed acyclic graph ($A_{ij} =1 \iff i \in \Parents_j$). The topological order divergence is defined as:
$$
D_{\textnormal{top}}(\hat \pi, A) = \sum_{i =1}^d \sum_{j:\hat \pi_i > \hat \pi_j} A_{ij},
$$
where $\hat \pi_i > \hat \pi_j$ means that node $i$ is successive to $j$ in the order. If $\hat \pi$ is the right topological order for $A$, then $D_{\textnormal{top}}(\hat \pi, A) = 0$. Else, $D_{\textnormal{top}}(\hat \pi, A)$ counts the number of edges that cannot be recovered due to the choice of topological order. For example, given a graph $X_1 \to X_2 \to X_3$ with adjacency $A$, the causal order $\hat \pi = \set{1, 3, 2}$ does not allow an edge $X_2 \to X_3$, and $D_{\textnormal{top}}(\hat \pi, A) = 1$.
Given that \cref{thm:identifiability}  concerns the identifiability of the causal order, and our goal is to empirically support our theoretical findings, the topological order divergence is the right metric to monitor. In \cref{fig:multivar_linear} and \cref{fig:multivar_nonlinear} we report the average $D_{\textnormal{top}}$ over $20$ seeds, and the error bars are $95\%$ confidence intervals.

\paragraph{Random baseline.} The performance of our algorithm is compared with that of a random baseline: in particular, in the graph we report the mean accuracy of an algorithm that randomly sample a causal order among all possible permutations of the set $\set{1, ..., d}$, $d$ being the number of nodes. If the upper boundary of the $95\%$ confidence intervals around the mean accuracy of our method are lower than the mean of the random baseline, that's statistically significant empirical evidence in support of our theory.

Next, we proceed to analyse the experiments. We separately consider the case of inference on linear and nonlinear structural causal models.

{\subsubsection{Experiments on linear SCMs}}
When synthetic data are generated according to a linear model $\mathbf X = A\mathbf S$ ($A$  being the mixing matrix), the Hessian of the log-likelihood is equal to the inverse of the covariance matrix $\Sigma_{\bfX}$ (the Hessian, in this case, takes the name of \textit{precision matrix}). For this reason, in the linear setting, we replace the Stein gradient estimator of the Hessian with a simple approximation of the covariance $\Sigma_{\mathbf X}$ via averaging. The motivation is two-fold: \textit{(i)} Hessian estimation via the Stein gradient is unstable as the dimension of the graph grows (see, e.g., \citep{montagna2023scalable}); \textit{(ii)} the average estimator is much faster, which allows us to scale our experiments to higher dimensions. In the linear case, our method is similar to the BACKSHIFT algorithm \citep{rothenhausler2015backshift}.

\paragraph{Synthetic data generation.} We analyse the performance of \cref{alg:jacobian_support_sketch} on graphs with $\set{10, 20, 50}$ nodes, respectively with number of edges $\set{10, 40, 100}$. Graphs are generated via the Erd{\"o}s–R{\'e}nyi model \citep{erdos1960on}. For each graph, we run experiments with $\set{3, 6, 9}$ environments. Rescaling coefficients for the source variance are uniformly sampled between $2$ and $\min(2|\mathcal G|, 10)$, $|G|$ being the number of nodes in the considered graph. A dataset from a single environment consists of $2000$ i.i.d. samples. The linear regression coefficients are uniformly sampled from $[2, 5]$, and the sign of the coefficient is randomly flipped.

\paragraph{Analysis of the experiments.} In \cref{fig:multivar_linear} we see that even in high dimensions, our method can infer causality on linear Gaussian models with as few as three environments. In particular, on $10$ nodes, the mean error is reduced by $\approx 75\%$ compared to the random baseline; on $20$ nodes, we see improvements of $\approx 45\%$; on $50$ nodes, the error decreases by $\approx 30\%$. It's remarkable how the method's accuracy does not improve with more than $3$ environments. This is in line with our theory, which demonstrates that $3$ sufficiently different environments guarantee identifiability of the causal graph.

\begin{figure}
    \centering
    \includegraphics[width=0.7\linewidth]{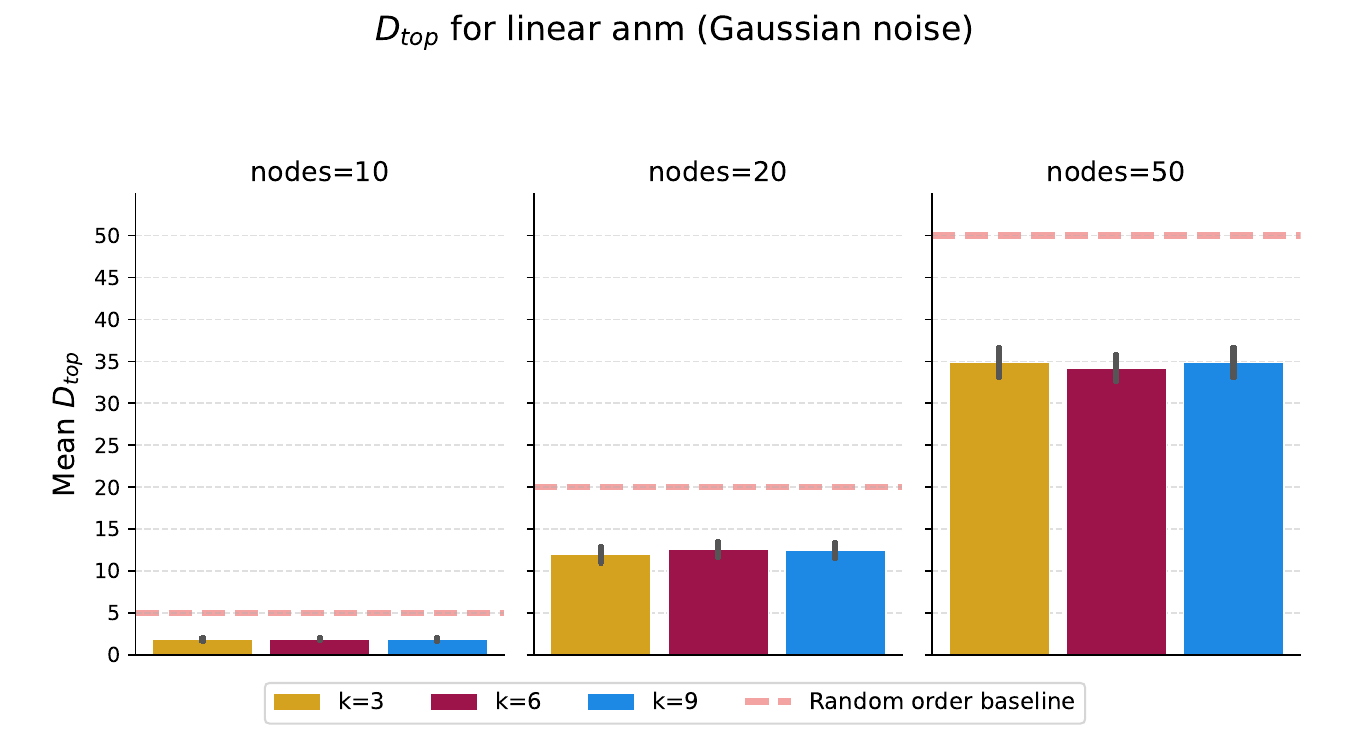}
    \caption{Mean $D_{\textnormal{top}}$ (the lower, the better) of \cref{alg:jacobian_support_sketch} on data generated with a synthetic linear SCM and graphs with different number of nodes ($10, 20, 50$). Error bars are $95\%$ confidence intervals. $k$ refers to the number of environments. We note that, in line with our theory, $3$ environments are sufficient to infer causality much better than random.}
    \label{fig:multivar_linear}
\end{figure}

{\subsubsection{Experiments on nonlinear SCMs}}
We now consider the empirical performance of \cref{alg:jacobian_support_sketch} on nonlinear structural causal models with $5$ nodes. With already $10$ nodes, we observe that our method infers a causal order that is, on average, no better than random, suggesting that further research for a good algorithmic implementation of our theoretical findings is necessary. To put this in perspective, we remark the goal of our experiments, and more generally, of the paper: the contribution of our work is devoted to establishing novel identifiability results for causal discovery with multiple environments, leveraging the duality between ICA and structural causal models; on the contrary, the goal is not to present novel algorithmic solutions based on these results. With this in mind, we design \cref{alg:jacobian_support_sketch} as a simple implementation of the steps in the proof of \cref{thm:identifiability}; we do not claim that this is a good strategy beyond our purpose of validating the theory with toy examples. In fact, according to the literature and our experience, multi-environment causal discovery with ICA is a challenging problem on its own (see the discussion in \cref{app:exp_limitations}): as such, we leave it to future research. Our experiments only serve the purpose of demonstrating that our theoretical results and our proof techniques are correct. In line with this goal, we find that our method only requires $3$ environments to infer causal directions significantly better than random on $5$ nodes, even in challenging nonlinear scenarios.

\paragraph{Synthetic data generation.}  We consider synthetic data generated with nonlinear structural causal models that are not identifiable from pure observations, and satisfy the assumptions of \cref{thm:identifiability}. In particular, given a variable $x_j$ and its parents $x_{\Parents_j}$, our mechanisms are defined as follows: first we define a \textit{cause} random variable $c := \frac{1}{|\Parents_j|} \sum_{k \in \Parents_j} x_{k}$ as the mean of the parents; then, given the noise $s_j$, we consider the following causal mechanisms: \textit{(i)} $x_j := \cos(c)s_j + \arctan(s_j)$; \textit{(ii)}  $\tanh(c) \arctan(s_j) + s_j^3$; \textit{(iii)} $\sin(c) + \arctan(c)s_j + \cos(c)s_j^3$. Note that, differently from the experiments in \cref{sec:experiments} on bivariate graphs, we wrap the \textit{cause} in trigonometric functions and avoid polynomials. This is to prevent the variance from growing polynomially in the causal direction (a well-known phenomenon in simulated SCMs \citep{reisach2021beware}), which we observed to cause all values in the Hessian of the log-likelihood to collapse to zero. Graphs are generated via the Erd{\"o}s–R{\'e}nyi model \citep{erdos1960on}. For each graph, we run experiments with $\set{3, 6, 9}$ environments. The rescaling coefficients per-environment of the source covariance are uniformly sampled between $2$ and $10$. A dataset from a single environment consists of $2000$ i.i.d. samples.

\paragraph{Analysis of the experiments.} \cref{fig:multivar_nonlinear} shows that, for structural causal models with $5$ nodes, $5$ edges and nonlinear mechanisms, information about the causal order can be inferred by our method: in particular, compared to a random baseline, whose expected $D_{\textnormal{top}}$ is $2.5$, our method with $3$ environments yields improvements between $\approx 30 \%$ (on nonlinear mechanisms of type \textit{(i)}) and $\approx 25\%$ (for mechanisms of type \textit{(iii)}). Notably, in line with our theory, adding environments does not decrease the average error across seeds, showing that only $3$  sufficiently different environments are needed for inference.  

\begin{figure}
    \centering
    \includegraphics[width=0.7\linewidth]{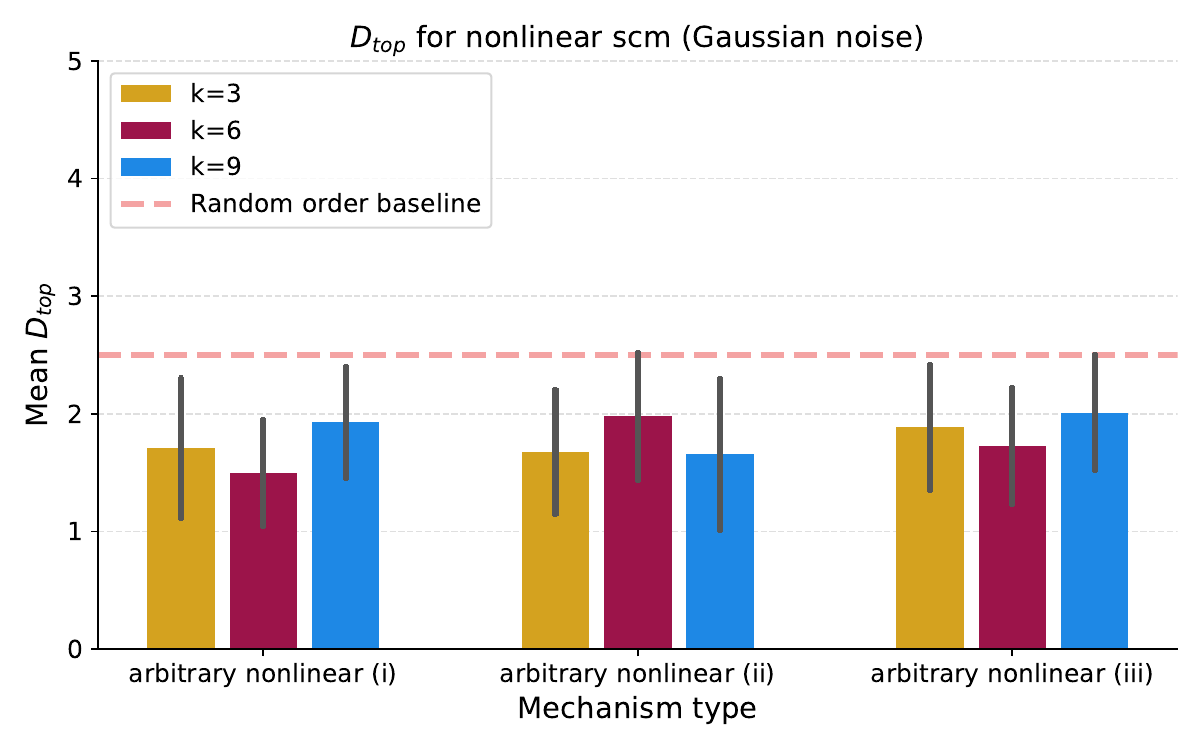}
    \caption{Mean $D_{\textnormal{top}}$ (the lower, the better) of \cref{alg:jacobian_support_sketch} on data generated with a synthetic nonlinear SCMs with $5$ variables. Error bars are $95\%$ confidence intervals. $k$ refers to the number of environments. We note that, in line with our theory, $3$ environments are sufficient to infer causality better than random, and adding environments does not decrease the error.}
    \label{fig:multivar_nonlinear}
\end{figure}

{\section{Assumptions deepdive}}
We present further discussion on the assumptions of our theory and potential extensions beyond them.

{\subsection{Beyond Gaussianity}}\label{app:thm1_beyond_gaussianity}
One of the key restrictions of our theory is that it requires the independent noise terms to be Gaussian. In the main paper, we discuss how this can be relaxed to noise distributions whose gradient of the log-likelihood has a critical point. {Here, we expand on the discussion of \cref{par:thm1_beyond_gaussianity} to illustrate the fundamental limit of our proof technique to address the case of general noise distributions. To begin, we provide a step-by-step mathematical intuition of why Gaussianity is crucial for our proof}.
The key ingredient of our theory is the analysis of the Hessian of the log-likelihood. By the chain rule of differentiation, it can be verified that the score function at a data point $\x$, under environment $i$, satisfies:
\begin{equation}\label{eq:score}
    \nabla \log p^e(\x) = \Jfx^T \nabla \log p^e(\s).
\end{equation}
Applying once again the chain rule, one can easily verify the following expression of the Hessian of the log-likelihood:
\begin{equation}\label{eq:hessian_app}
    \Jfx^T D^2_{\mathbf \s} \log p^e(\s) \Jfx + D_\x^2 \log |\Jfx| + \sum_{j=1}^d \partial s_j \log p^e(s_j) D^2 \f^{-1}_j(\x).
\end{equation}
The information about the causal graph is contained in the product of Jacobians $\Jfx^T D^2_{\mathbf \s} \log p^e(\s) \Jfx$ (the diagonal Hessian in between doesn't play a significant role). To access this information from the Hessian of the log-likelihood, we need to get rid of:
\begin{enumerate}
    \item The log-det term $D^2_\x \log |\Jfx|$;
    \item The summation $\sum_{j=1}^d \partial s_j \log p^e(s_j) D^2 \f^{-1}_j(\x)$.
\end{enumerate}
Being the mechanisms $\f$ invariant across the environments, it is immediate to see that $\log |\Jfx|$ vanishes in the difference $D^2_{\mathbf \x} \log p(\x) - D^2_{\mathbf \x} \log p^e(\x)$. The assumption of Gaussianity, instead, is crucial to get vanishing summation: in fact, we know that the mean of the sources $\s = \mu_{\bfS}$ is a critical point of $\log p_\bfS$. This clarifies why the assumption of Gaussianity is crucial for our theory.

A natural question is whether our theory can extend to structural causal models with more general classes of noise distributions. Beyond density functions with a critical point, the answer is generally negative. To show why this is the case, we consider the exponential family, which encompasses a large class of common distributions. Let $\mathbf S$ distributed according to the exponential family with the vector of parameters $\mathbf \theta$ (in the Gaussian case, $\mathbf \theta = (\mu_\mathbf{S}, \Sigma_{\mathbf S})$). Then:
\begin{equation}
    \log p(\mathbf s) = \log h(\mathbf s) + \eta(\theta) \cdot T(\mathbf s) - A(\eta),
\end{equation}
where $h(\mathbf s)$ is the so called \textit{base measure}, $\eta(\theta)$ is the vector of the \textit{natural parameters}, $T(\s)$ is the vector of \textit{sufficient statistics}, and $A(\eta)$  is the \textit{partition function}. Now, assume that, akin to the Gaussian case, we define auxiliary environments  (cf. \cref{eq:environment}) by changing $\theta^e$ parameters for each environment $e$. The difference of the score of the observed variables $\x$, in this case, becomes:
$$
\nabla \log p(\s)  - \nabla \log p^e(\s) = T(\s)\cdot(\eta(\theta) - \eta(\theta^e)).
$$
Assuming that $\theta \neq \theta^e$ in each component, we get that the score of the sources vanishes if and only if $T(\s) = 0$ or orthogonal to $\eta(\theta) - \eta(\theta^e)$. Clearly, orthogonality can not be enforced unless we carefully craft the intervention on $\theta$. It remains to consider whether the $T(\s)$ vanishes at any point. A simple inspection of the sufficient statistics of the density functions in the exponential family reveals that this is often not the case. 

The takeaway of our discussion are: \textit{(i)} that, as far as it concerns our methodology, vanishing gradient of the log-likelihood at one point at least is \textit{necessary}; when this is not the case, we can not extract the product of Jacobian matrices (hence, the DAG information) from the Hessian of the log-likelihood. This is in line with previous work \citep{montagna2023scalable,montagna2025score}, showing that the Hessian matrix can only inform about the equivalence class of the ground truth graph. \textit{(ii)} For wider classes of noise distributions, in general, we can not hope that the vanishing gradient condition is satisfied. Thus, extension of our results requires substantial additional research in terms of proof techniques. 

{\subsection{Beyond causal sufficiency}}
In this section, we address the question of whether our methodology can be adapted to demonstrate the identifiability of parts of the causal graph in potentially confounded scenarios. The duality between ICA and causal discovery that is key to this paper remains relevant even in this scenario. This was explicitly highlighted in \citet{ding2019likelihood}, where, in the context of linear SCMs with latent confounders, causal discovery is phrased and analysed as an overcomplete ICA problem. For general nonlinear structural causal models, the presence of latent confounders induces an ICA model $\mathbf X = \f(\mathbf S)$ with $\f: \R^{d_s} \to \R^{d_x}$ and $d_s > d_x$. First, we discuss why our proof technique can not be generalized to this scenario when $\f$ is nonlinear. Then, we show that in the case of linear structural causal models, our findings can be used to derive known theory of identifiability of SCMs without causal sufficiency. 

We remind that the key theoretical result that enables identifiability in our setting (\cref{thm:identifiability}) is \cref{lem:hess_diff}, which we report below. 
\HessianDiff*
Clearly, the result above relies on the invertibility of the causal mechanism $\mathbf f$. Moreover, it is easy to show that $\Omega_i, \widehat \Omega_i$ are diagonal, which is key to the proof of \cref{thm:identifiability}. Unfortunately, in overcomplete ICA:
\begin{enumerate}
    \item It is trivial that $\mathbf f$ is not invertible.
    \item Less trivially, computations based on the coarea formula \citep{negro2021sampledistributiontheoryusing} show that $\Omega_i, \widehat \Omega_i$ are non-diagonal.
\end{enumerate}
From this, we conclude that generalizing our method for arbitrary nonlinear and confounded SCMs is not a feasible route, and more elaborate tools and ideas are required. We note that, exceptionally, the Hessian of the log-likelihood is still informative about the causal graph in case of linear and overcomplete SCMs: in fact, its inverse is the covariance of the data, namely, $(D^2_\x \log p(\x))^{-1} = \Sigma_{\bfX} = A\Sigma_{\bfS}A^T$, for a structural model of the form $\mathbf X = A \mathbf S$, with $A$ rectangular, wide, matrix. Notably, in this setting, rank constraints and trek separations \citep{sullivant2010trek} are informative about the causal graph. 

\section{Additional content}
In this section, we collect some useful results and notes relevant to the main paper.

\subsection{Graph theory}\label{app:graph_theory}

\paragraph{Directed graphs and DAGs.}
Let \(X_1,\ldots,X_d\) be a vector of random variables. A graph \(\mathcal G=(\set{X_i}_i^{d},E)\) consists of a vertex set \(\set{X_i}_i^{d}\) and an edge set \(E\). We recall a few basic notions for directed graphs.

A \emph{directed edge} \(X_i \to X_j\) indicates that \(X_i\) is a \emph{parent} of \(X_j\) (and \(X_j\) a \emph{child} of \(X_i\)). $\Parents_i \subset [d]$ denotes the index of the parent nodes of $X_i$ in the graph $\mathcal G$, $\Child_i \subset [d]$ denotes the children. A \emph{path} in \(\mathcal G\) is a sequence of at least two distinct vertices
\(\pi = X_{i_1},\ldots,X_{i_m}\) such that each consecutive pair \(X_{i_k}\) and \(X_{i_{k+1}}\) is joined by an edge for \(k=1,\ldots,m-1\).
If every edge along the path is oriented forward, \(X_{i_k}\to X_{i_{k+1}}\), we call it a \emph{directed path}; then \(X_{i_1}\) is an \emph{ancestor} of \(X_{i_m}\) and \(X_{i_m}\) a \emph{descendant} of \(X_{i_1}\).

\subsection{From SCM to ICA models}\label{app:scm_to_ica}
\cref{eq:ica} claims that structural causal models can be expressed in the form of ICA models. Here, we show how this can be achieved. Consider a set of causal variables $\mathbf X = (X_i)_{i=1}^d$, and without loss of generality, assume that the causal order is $1,...,d$. According to \cref{eq:scm}, for each $i=1,...,d,$ we have:
$$X_i := F_i(\mathbf X_{\Parents_i}, S_i),$$
with $\mathbf S = (S_i)_{i=1}^d$ the vector of mutually independent noise terms. An inductive argument shows the existence of a function $f_i: \mathbf{S}_{\Ancestors_i} \mapsto X_i$, where $\Ancestors_i$ denotes the indices of the ancestor nodes of $X_i$ in the causal graph. Given the causal order $1,...,d$, the base case is given for $X_1 := F_1(S_1)$, such that $f_1 := F_1$. The inductive step is as follows: assume that there is $n < d$ such that $X_i = f_i(\mathbf S_{\Ancestors_i}, S_i)$ for all $i=1,...,n$. Then, there is a map $\mathbf S_{[n]} \mapsto \mathbf X_{[n]}$. The causal order $1,...,d$ implies $\Ancestors_{n+1} \subset [n]$, so that there is a map $\mathbf S_{[n]} \mapsto \mathbf X_{\Ancestors_{n+1}}$: given that $\Parents_{n+1} \subseteq \Ancestors_{n+1}$,  there is a map $g : \mathbf S_{[n]} \mapsto \mathbf X_{\Parents_{n+1}}$: from the structural equation $X_{n+1} := F_{n+1}(\mathbf X_{\Parents_{n+1}}, S_{n+1}) = F_{n+1}(g(\mathbf S_{\Ancestors_{n+1}}), S_{n+1})$, we conclude that there is $f_{n+1}: \mathbf S_{\Ancestors_{n+1}}, S_{n+1}\mapsto \mathbf X_{n+1}$. Then, we define $\f := (f_i)_{i=1}^d$ and find
$$
\bfX = \f(\bfS).
$$
An important note is that the DAG structure of the causal graph is reflected in the Jacobian of the mixing function $\f$, which can be shown to be lower triangular.

\subsection{Hessian of the log-density of independent random variables}\label{app:diagonal_hess}
In the main paper we mention that the $\Omega_1, \Omega_2$ matrices defined in \cref{eq:omega} are diagonal; here, we discuss why this is true. More generally, it is well known that for a vector of independent random variables $\mathbf Z \in \R^d$ with density $p$, the following holds:
\begin{equation}\label{eq:lem_spantini}
    \frac{\partial^2}{\partial Z_i \partial Z_j} \log p(\mathbf Z) = 0 \iff Z_i \indep Z_j | \mathbf Z \setminus \set{Z_i, Z_j},
\end{equation}
where $Z_i \indep Z_j | \mathbf Z \setminus \set{Z_i, Z_j}$ indicates that $Z_i, Z_j$ are independent conditional on all the remaining random variables in the vector $\mathbf Z$. This result was shown in \citet{lin1997factorizing} and \citet{spantini2018inference} (Lemma 4.1) and extensively adopted in the context of causal discovery (e.g., \citet{montagna2023scalable,montagna2025score}). By \cref{eq:lem_spantini} it is immediate to see that independence of $\mathbf Z$ entries implies that $D^2_{\mathbf Z} \log p(\mathbf Z)$ is diagonal.

\subsection{Measure theoretic arguments in support of the assumptions}
First, we show that \cref{ass:invertible_omega} generically holds. 

\begin{proposition}[\cref{ass:invertible_omega} holds almost surely]
\label{prop:assumption5_as}
Let $L_e=\mathrm{diag}(\lambda_1^e,\dots,\lambda_d^e)$ and $\lambda_j^e\neq 0$, $e=1,...,k$. Assume the joint law of the array $\Lambda=(\lambda_j^e)_{j\in[d],\, e\in [k]}$
is absolutely continuous with respect to Lebesgue measure on
$\bigl(\mathbb R\setminus\{0\}\bigr)^{dk}$.
Then, with probability one over the draw of $\Lambda$:
for every $j\in[d]$,
\[
\sum_{e\in [k]}\frac{1}{(\lambda_j^e)^2}\ \neq\ k.
\]
\end{proposition}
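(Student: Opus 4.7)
The plan is to show, for each fixed coordinate $j\in[d]$, that the "bad" set of rescaling coefficients $(\lambda_j^1,\dots,\lambda_j^k)$ on which $\sum_i (\lambda_j^i)^{-2}=k$ is a Lebesgue-null subset of $(\mathbb{R}\setminus\{0\})^k$; the conclusion then follows by a union bound over the $d$ coordinates together with the absolute continuity hypothesis.

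First I would fix $j$ and define the smooth map $g_j:(\mathbb{R}\setminus\{0\})^k\to\mathbb{R}$ by
\[
g_j(u_1,\dots,u_k) \;=\; \sum_{i=1}^k \frac{1}{u_i^2} \;-\; k.
\]
Computing $\partial_{u_i} g_j = -2/u_i^{3}$ shows that the gradient of $g_j$ never vanishes on its domain, so $0$ is a regular value. By the regular value theorem, the level set $g_j^{-1}(0)$ is a smooth embedded submanifold of codimension one, and any such $(k-1)$-dimensional submanifold of $\mathbb{R}^k$ has $k$-dimensional Lebesgue measure zero. (Equivalently: $g_j$ is a non-constant real-analytic function, so its zero set is a proper real-analytic variety and hence Lebesgue-null.)

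Next I would transfer this to the probabilistic setting. The event
\[
E_j \;:=\; \Bigl\{\Lambda:\ \sum_{i=1}^k (\lambda_j^i)^{-2} = k\Bigr\}
\]
is the cylinder $g_j^{-1}(0)\times(\mathbb{R}\setminus\{0\})^{(d-1)k}$ inside $(\mathbb{R}\setminus\{0\})^{dk}$, which has $(dk)$-dimensional Lebesgue measure zero by Fubini. Because the joint law of $\Lambda$ is absolutely continuous with respect to this Lebesgue measure by assumption, $\mathbb{P}(E_j)=0$. Finally, a union bound over the finite index set $[d]$ gives $\mathbb{P}\bigl(\bigcup_{j\in[d]} E_j\bigr)=0$, which is exactly the desired "with probability one" statement.

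There is no substantive obstacle here: this is a standard "generic position" argument, and the only mild point to be careful about is phrasing the cylinder-set / Fubini step so that it uses absolute continuity of the full joint law rather than independence of the marginals (the coordinates $(\lambda_j^i)_{j,i}$ are not assumed independent). The regular-value computation guarantees the argument works uniformly in $j$, so no additional genericity hypothesis on the law is needed beyond the stated absolute continuity.
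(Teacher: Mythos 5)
Your proof is correct and follows essentially the same route as the paper's: show the gradient of $\lambda\mapsto\sum_i\lambda_i^{-2}-k$ never vanishes, invoke the regular level-set theorem to get a measure-zero level set, then use absolute continuity and a finite union bound over $j$. Your extra cylinder-set/Fubini remark just makes explicit the passage from the full joint law to the coordinate block, a step the paper handles by noting that marginals of an absolutely continuous law are absolutely continuous.
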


\begin{proof}
Fix $j\in[d]$. Write $k=|\mathcal{E}_l|$ and
$\lambda:=(\lambda_j^e)_{e\in [k]}\in(\mathbb R\setminus\{0\})^k$.
Consider the smooth map $F:\,(\mathbb R\setminus\{0\})^k\to\mathbb R$,
\[
F(\lambda)\ =\ \sum_{r=1}^k \lambda_r^{-2}\ -\ k.
\]
Its gradient is $\nabla F(\lambda)=(-2\lambda_1^{-3},\dots,-2\lambda_k^{-3})\neq 0$ on the domain, so $0$
is a regular value. By the regular level–set theorem, $F^{-1}(0)$ is a
$(k-1)$-dimensional embedded submanifold of $\mathbb R^k$ and hence has Lebesgue
measure zero. Because the $k$-tuple $\lambda=(\lambda_j^e)_{e\in [k]}$ has a distribution
absolutely continuous with respect to Lebesgue measure, we get
\[
\mathbb P\!\left(\sum_{e\in \mathcal{E}_l}\frac{1}{(\lambda_j^e)^2}=k\right)=0.
\]
Taking the finite union over $j=1,\dots,d$ preserves measure zero, so
with probability one none of these equalities occurs.
\end{proof}

Next, we show that the assumption of pairwise distinct $\set{(\Omega_1 \Omega_2^{-1})_ii}_{i\in [d]}$ elements (definition at \cref{eq:omega}) generically holds.

\begin{proposition}[Pairwise distinct diagonal ratios hold almost surely]\label{prop:distinct-omega-ratios}
Let $\mathcal{E}_1, \mathcal{E}_2 \subset [k]$ with $k \geq 3$. For each environment $e$ let
$L_e=\mathrm{diag}(\lambda^e_1,\dots,\lambda^e_d)$ with $\lambda^e_j\neq 0$.
Assume the joint law of the array $\Lambda=(\lambda_j^e)_{j\in[d],\, e\in [k]}$
is absolutely continuous with respect to Lebesgue measure on
$\bigl(\mathbb R\setminus\{0\}\bigr)^{dk}$.
Suppose moreover that $\Omega_\ell$ is diagonal with entries
\[
(\Omega_\ell)_{jj} \;=\; \frac{1}{\sigma_j^2}\Bigl(\sum_{e\in \mathcal{E}_\ell}(\lambda^e_j)^{-2}-|\mathcal{E}_\ell|\Bigr) \neq 0.
\qquad \ell\in\{1,2\},\ j\in[d],
\]
Then, with probability one over the draw of $\Lambda$, $\Omega_1$ is invertible and
the diagonal entries of $\Omega_1^{-1}\Omega_2$ are pairwise distinct.
\end{proposition}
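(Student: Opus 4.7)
The plan is to reduce the claim to an almost-sure non-vanishing statement about one real analytic function per pair of indices. Since $\Omega_1$ and $\Omega_2$ are diagonal by hypothesis, with $\Omega_1$ invertible because its diagonal is nonvanishing, the entries of $\Omega_1^{-1}\Omega_2$ are simply the ratios
\[
R_j(\Lambda) \;=\; \frac{(\Omega_2)_{jj}}{(\Omega_1)_{jj}} \;=\; \frac{\sum_{i\in I_2}(\lambda^i_j)^{-2}-|I_2|}{\sum_{i\in I_1}(\lambda^i_j)^{-2}-|I_1|},
\]
after cancellation of $\sigma_j^2$. Crucially $R_j$ depends only on the column $(\lambda_j^i)_{i\in I_1\cup I_2}$, so for $j\neq k$ the two ratios $R_j, R_k$ involve disjoint blocks of the random array $\Lambda$.

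For each ordered pair $j\neq k\in[d]$, I would cross-multiply and define the collision function
\[
G_{jk}(\Lambda) \;:=\; (\Omega_2)_{jj}(\Omega_1)_{kk} \;-\; (\Omega_2)_{kk}(\Omega_1)_{jj},
\]
which is a rational function on $(\mathbb R\setminus\{0\})^{dk}$ (denominators are just powers of the $\lambda$'s, hence nonzero on the domain), and which vanishes iff $R_j = R_k$. After the change of variables $\mu^i_j := (\lambda^i_j)^{-2}$, $G_{jk}$ becomes a genuine polynomial in the $\mu$'s on the connected open set $(0,\infty)^{dk}$. By the standard fact that the zero set of a non-identically-zero real analytic function on a connected open set has Lebesgue measure zero, together with the absolute continuity of the law of $\Lambda$, it then suffices to verify that $G_{jk}$ is not the zero polynomial.

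This is the only piece of real work and it is routine, done by exhibiting a single witness. I would fix $\lambda^i_l$ for $l\notin\{j,k\}$ at arbitrary admissible values, and set $\lambda^i_j = a$ for $i\in I_1$, $\lambda^i_j = b$ for $i\in I_2$, $\lambda^i_k = c$ for $i\in I_1$, $\lambda^i_k = d$ for $i\in I_2$. Then $R_j = |I_2|(b^{-2}-1)/[|I_1|(a^{-2}-1)]$ and $R_k = |I_2|(d^{-2}-1)/[|I_1|(c^{-2}-1)]$, and any choice such as $(a,b,c,d)=(2,\,1/2,\,2,\,3)$ makes these unequal while keeping all the $\Omega_\ell$ diagonal entries nonzero, certifying $G_{jk}\not\equiv 0$.

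A union bound over the $\binom{d}{2}$ unordered pairs preserves the measure-zero property, yielding pairwise distinctness almost surely. Invertibility of $\Omega_1$ is immediate from the hypothesis $(\Omega_1)_{jj}\neq 0$. I do not anticipate any substantive obstacle; the one bookkeeping point is to pass cleanly between the domain $(\mathbb R\setminus\{0\})^{dk}$ (which is disconnected) and the $\mu$-coordinates on $(0,\infty)^{dk}$ (which is connected), but this is handled either by the $\mu$-substitution above or simply by applying the analytic zero-set argument on each connected component separately.
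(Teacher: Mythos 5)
Your proof is correct, and it reaches the same reduction as the paper — writing the diagonal entries as ratios $B_j/A_j$ with $A_j=\sum_{i\in I_1}(\lambda^i_j)^{-2}-|I_1|$, $B_j=\sum_{i\in I_2}(\lambda^i_j)^{-2}-|I_2|$, cross-multiplying to a collision function per pair, and finishing with a finite union bound — but it certifies that each collision set is Lebesgue-null by a different mechanism. The paper changes variables to $t^i_h=(\lambda^i_h)^{-2}$, computes $\partial F_{j\ell}/\partial t^{i_0}_j = B_\ell \neq 0$, and invokes the regular level-set theorem to conclude $\{F_{j\ell}=0\}$ is a codimension-one submanifold, hence null; you instead observe that in the $\mu$-coordinates the collision function is a polynomial and apply the fact that a non-identically-zero real-analytic function on a connected open set has a null zero set, certifying non-triviality with a single explicit witness $(a,b,c,d)=(2,1/2,2,3)$ (which indeed gives $R_j=-4|I_2|/|I_1| \neq \tfrac{32}{27}|I_2|/|I_1| = R_k$ with all relevant quantities nonzero). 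Your route avoids any gradient or regular-value verification — in particular it does not need $B_\ell\neq 0$ on the level set, which the paper secures through its Proposition~\ref{prop:assumption5_as} — at the cost of the small bookkeeping you already flag: transferring the null set between the disconnected domain $(\mathbb R\setminus\{0\})^{dk}$ and the connected $\mu$-domain, which is handled either by noting $\lambda\mapsto\lambda^{-2}$ is a local diffeomorphism on each orthant (so preimages of null sets are null) or by running the analytic argument orthant by orthant, using the sign-invariance of the collision function to reuse the witness. Both arguments are sound; yours is marginally more elementary, the paper's is marginally more self-contained on the connectivity point.
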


\begin{proof}
Write
\[
(\Omega_1^{-1}\Omega_2)_{jj}
\;=\;
\frac{ \sum_{e\in \mathcal{E}_2}(\lambda^e_j)^{-2}-|\mathcal{E}_2| }{ \sum_{e\in \mathcal{E}_1}(\lambda^e_j)^{-2}-|\mathcal{E}_1| }
\;=:\; \frac{B_j}{A_j},
\qquad
A_j:=\sum_{e\in \mathcal{E}_1}(\lambda^e_j)^{-2}-|\mathcal{E}_1|,
\ \
B_j:=\sum_{e\in \mathcal{E}_2}(\lambda^e_j)^{-2}-|\mathcal{E}_2|.
\]
By \cref{prop:assumption5_as}, $A_j\neq 0$ and $B_j\neq 0$ for all $j$ with probability one, such that $\Omega_1$ is invertible.

Fix $j\neq \ell$. The collision event
$(\Omega_1^{-1}\Omega_2)_{jj}=(\Omega_1^{-1}\Omega_2)_{\ell\ell}$
is equivalent to
\[
\frac{B_j}{A_j}=\frac{B_\ell}{A_\ell}
\quad\Longleftrightarrow\quad
F_{j\ell}(\Lambda):=A_j B_\ell - A_\ell B_j=0.
\]
Let $t^e_h:=(\lambda^e_h)^{-2}$ and view $F_{j\ell}$ as a smooth function of the
$2k$ variables $\{t^e_j\}_{e\in [k]}\cup\{t^e_\ell\}_{e\in [k]}$.
For any fixed $e_0\in \mathcal{E}_1$,
\[
\frac{\partial F_{j\ell}}{\partial t^{\,e_0}_j}
= \frac{\partial A_j}{\partial t^{\,e_0}_j}\,B_\ell - A_\ell\,\frac{\partial B_j}{\partial t^{\,e_0}_j}
= 1\cdot B_\ell - A_\ell\cdot 0
= B_\ell.
\]
Since $B_\ell\neq 0$, we have $\nabla F_{j\ell}\neq 0$ on the set under consideration,
so $0$ is a regular value of $F_{j\ell}$. By the regular level-set theorem, the set
$\{F_{j\ell}=0\}$ is a $(2j-1)$-dimensional embedded submanifold of $\R^{2k}$, hence it has Lebesgue measure zero.
Because the law of $\Lambda$ is absolutely continuous w.r.t. the Lebesgue measure,
\[
\mathbb P \left((\Omega_1^{-1}\Omega_2)_{jj}=(\Omega_1^{-1}\Omega_2)_{\ell\ell}\right)=0.
\]
Taking the finite union over all pairs $j\neq \ell$ yields that, with probability one,
no two diagonal entries coincide; that is, $\{(\Omega_1^{-1}\Omega_2)_{jj}\}_{j=1}^d$
are pairwise distinct.
\end{proof}

\subsection{Fixed mechanisms environments in real-world data}\label{app:fixed_mech}
{In this section we briefly discuss the assumption of \textit{fixed mechanisms} across environments that is formalized in the \textit{invariance principle}  (\cref{sec:environments}): given two environments $\bfX^e = \f(\bfS^e)$, $\bfX^{e'} = \f(\bfS^{e'})$, they share the same causal mechanism $\f$. In particular, we present examples from the domain of single-cell and gene perturbation causality studies where multiple environments with fixed mechanisms are commonly hypothesized. This suggests that our modeling assumptions, hence our theory, have practical relevance.}

{\citet{liu2025learning} and \citep{lopez2023learning} assume an SCM and explicitly model gene and single-cell (respectively) perturbations as changes in the distribution of causal variables, while leaving all SCM mechanisms fixed. Similarly, but without an explicit assumption of a structural causal model, \citet{zhang2023causal} consider interventions on latent factors that leave causal mechanisms unchanged. \citet{meinshausen2016methods} studies the problem of gene perturbation through the Invariance Causal Prediction framework \citep{peters2015invariant}: in this context, they discuss the example of environments defined with fixed causal mechanisms and noise variance affected by a multiplier that is environment dependent. This is precisely in line with the modelling assumptions of our theory.}

\end{document}